\documentclass[11pt]{article}

\usepackage[final]{acl}

\usepackage{times}
\usepackage{latexsym}

\usepackage[T1]{fontenc}
\usepackage[utf8]{inputenc}

\usepackage{microtype}
\usepackage{inconsolata}
\usepackage{graphicx}

\usepackage{booktabs}
\usepackage{amsmath}
\usepackage{amssymb}
\usepackage{amsthm}
\usepackage{array}
\usepackage{multirow}
\usepackage{makecell}
\usepackage{tabularx}
\usepackage{enumitem}
\usepackage{xspace}
\usepackage{placeins}
\usepackage{algorithm}
\usepackage{algpseudocode}
\algrenewcommand\algorithmicrequire{\textbf{Input:}}
\algrenewcommand\algorithmicensure{\textbf{Output:}}

\title{Target-Aware Calibration Data Selection for Preserving Uncertainty in Quantized Language Models}

\author{
Zhen Yang$^{1,\dagger}$,
Sizai Hou$^{2,\dagger}$,
Kaiwen Zheng$^{3}$,
Yaofang Liu$^{4}$,
Liang He$^{5}$
\\[-0.1em]
{\bfseries
Yixuan Chen$^{6,*}$,
Kangning Cui$^{7,*}$}
\\[0.2em]
\normalfont
\mbox{$^{1}$Yale University}
\qquad
\mbox{$^{2}$The Hong Kong University of Science and Technology}
\\
\mbox{$^{3}$The Hong Kong University of Science and Technology (Guangzhou)}
\\
\mbox{$^{4}$City University of Hong Kong}
\qquad
\mbox{$^{5}$Shanghai Institute of Optics and Fine Mechanics}
\\
\mbox{$^{6}$University of Oxford}
\qquad
\mbox{$^{7}$City University of Hong Kong (Dongguan)}
\\
$^\dagger$ Equal contribution
\qquad
$^*$ Corresponding authors
}

\newcommand{\dpq}{\textsc{DPQ}\xspace}
\newcommand{\fp}{\textsc{FP}\xspace}
\newcommand{\gptq}{\textsc{GPTQ}\xspace}
\newcommand{\bnb}{\textsc{BNB-NF4}\xspace}
\newcommand{\squad}{\textsc{SQuAD2}\xspace}
\newcommand{\jsd}{\mathrm{JSD}}
\newcommand{\mcqa}{\textsc{MCQA}\xspace}
\newcommand{\calD}{\mathcal{D}}
\newcommand{\calY}{\mathcal{Y}}
\newcommand{\R}{\mathbb{R}}
\newtheorem{proposition}{Proposition}
\newtheorem{corollary}{Corollary}
\usepackage[table]{xcolor}
\definecolor{bestblue}{HTML}{C9DAF8}    
\definecolor{secondblue}{HTML}{EAF2F8}  
\definecolor{headergray}{HTML}{E6E6E6}
\definecolor{rowgray}{HTML}{F7F7F7}

\begin{document}
\maketitle

\begin{abstract}
Quantization is widely used to deploy large language models, but its effect on uncertainty behavior, such as confidence, margins, and abstention, is rarely treated as a primary objective. We frame calibration-data selection for quantization as a \emph{target-dependent uncertainty-preservation problem}. Different deployments emphasize different regions of the input distribution, yet prior work mainly optimizes accuracy-oriented compression metrics or adjusts scores after quantization. We formalize this goal with distributional and boundary preservation risks, and provide a simple mixture-mismatch argument explaining why no single calibration recipe should be expected to fit all targets. We introduce Doubt-Preserving Quantization (\dpq), a lightweight pre-quantization recipe family that uses full-precision predictions to construct target-aligned calibration mixtures of high-doubt examples and generic anchors. Across 8 language models, 9 NLP benchmarks, and 22 comparison methods, the leading fixed recipe changes with the preservation target: \dpq-r75 leads on \squad answerability-boundary preservation, while milder or single-signal variants, including \dpq-r50, confidence-only, and entropy-only, better preserve broad multiple-choice QA behavior. These results show that calibration data should be selected for the specific full-precision score behavior a deployment needs to preserve, rather than treated as a fixed quantization detail. {Code is available at \url{https://github.com/xi-xiaoran/DPQ}.}
\end{abstract}

\section{Introduction}

Quantization has become a standard tool for deploying large language models (LLMs) under memory and compute budgets. Weight-only and low-bit post-training methods such as \gptq \citep{frantar2023gptq}, \textsc{AWQ} \citep{lin2023awq}, SmoothQuant \citep{xiao2023smoothquant}, OmniQuant \citep{shao2024omniquant}, \textsc{LLM.int8()} \citep{dettmers2022llmint8}, SpQR \citep{dettmers2024spqr}, and QLoRA-style 4-bit representations \citep{dettmers2023qlora} make large models accessible on smaller hardware. Most compression evaluations, however, still focus on accuracy, perplexity, or benchmark scores~\citep{he2026budgetdraft,wu2026flashsvd}. This is incomplete for applications that rely on model scores for abstention, reranking, ensembling, verification, or answerability decisions. In these settings, preserving confidence, margins, and answerability decisions can be as important as preserving the top-1 answer~\citep{shao2026decodeshare}. 
Recent work shows that quantization can shift confidence \citep{proskurina-etal-2024-quantization} and that calibration data affects compression quality \citep{williams-aletras-2024-impact}, but calibration-data selection rarely targets preservation of \fp uncertainty behavior.

\begin{figure}[t]
    \centering
    \includegraphics[width=\linewidth]{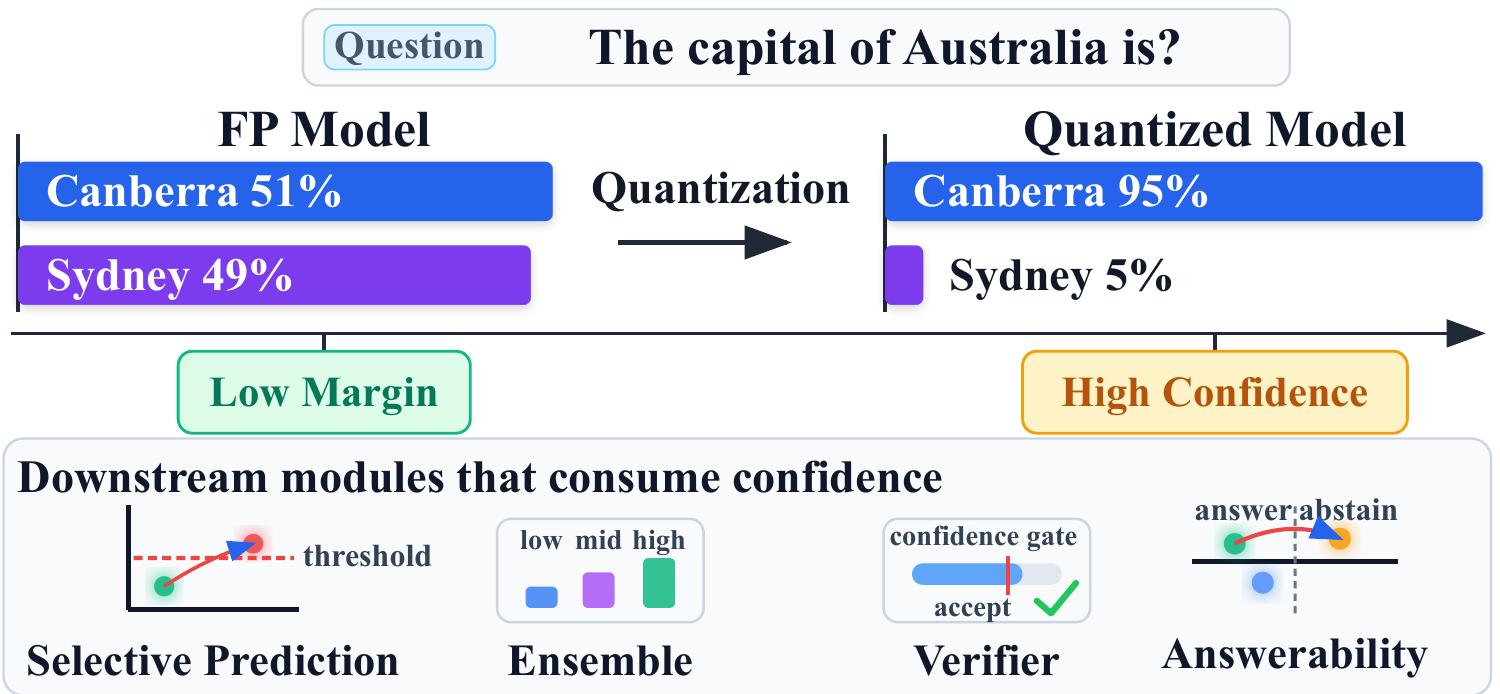}
    \caption{
    Motivating example of uncertainty drift after quantization.
    The top-1 answer is unchanged, but the confidence margin shifts substantially, affecting downstream uncertainty-aware decisions.
    }
    \label{fig:motivation}
\end{figure}

This distinction matters even when accuracy is unchanged. As illustrated in Figure~\ref{fig:motivation}, an \fp model may assign option probabilities $(0.51,0.49)$ while its quantized counterpart assigns $(0.95,0.05)$, preserving the same top-1 answer but changing the confidence margin. Such shifts can alter downstream uncertainty-aware decisions, especially in answerability detection, where a small score change may determine whether the system answers an unanswerable question. Existing approaches do not directly target this preservation problem. Accuracy-oriented calibration-data studies \citep{williams-aletras-2024-impact} optimize reconstruction quality or label accuracy, while post-hoc calibration \citep{guo2017calibration,zhong-etal-2025-quantized} adjusts scores after quantization. Temperature-style maps preserve the argmax and cannot repair decision flips, whereas more flexible maps can improve label calibration at the cost of moving the quantized model away from \fp behavior.

We therefore frame calibration-data selection for quantization as a target-dependent \emph{uncertainty-preservation problem}. Broad answerable tasks require distributional preservation of option probabilities, confidence, and margins, whereas answerability or abstention-heavy tasks require boundary preservation of low-margin answer/no-answer decisions. We formalize these targets as two preservation risks and give a mixture-mismatch argument that explains why no fixed mixture of boundary examples and generic anchors should be expected to fit all targets unless the targets coincide. Building on this result, we introduce \dpq (Doubt-Preserving Quantization), a lightweight pre-quantization framework that selects high-doubt and answerability-boundary examples using \fp predictions, mixes them with generic anchors, and runs the unchanged quantizer. Across 8 language models, 9 NLP benchmarks, and 22 comparison methods, we observe this target dependence: \dpq-s128-r75 performs best for \squad answerability-boundary preservation, while broad \mcqa favors milder or single-signal recipes such as \dpq-r50, confidence-only, and entropy-only. Post-hoc and \textsc{AWQ} analyses further show that preservation signals are partly transferable, but score-based metrics remain quantizer-specific.

\paragraph{Contributions.}
\begin{itemize}[leftmargin=*, itemsep=0pt, topsep=0pt]
\item \textbf{Problem.} We formulate calibration-data selection for quantization as preserving full-precision uncertainty behavior, with distributional and boundary risks for different deployment targets.
\item \textbf{Method.} We provide a mixture-mismatch argument and introduce \dpq as a lightweight target-aware recipe family: it uses \fp predictions to select high-doubt or boundary-near examples, mixes them with generic anchors, and leaves the quantizer unchanged.
\item \textbf{Evaluation.} Across 8 language models, 9 NLP benchmarks, and 22 comparison methods, we show that the best recipe depends on the preservation target: high-boundary mixtures better preserve answerability boundaries, while milder recipes better preserve broad \mcqa behavior.
\end{itemize}

\section{Related Work}
\label{sec:related}

\paragraph{LLM quantization.}
Post-training LLM quantization includes approximate second-order weight quantization \citep{frantar2023gptq}, activation-aware and activation-smoothing methods \citep{lin2023awq,xiao2023smoothquant}, 8-bit and 4-bit representations \citep{dettmers2022llmint8,dettmers2023qlora}, and recent low-bit PTQ systems \citep{yao2022zeroquant,shao2024omniquant,dettmers2024spqr,kim2024squeezellm,egiazarian2024aqlm,tseng2024quipsharp,ashkboos2024quarot}. These methods primarily optimize reconstruction or downstream accuracy. Calibration data supports quantization, but is less often selected with the explicit goal of preserving confidence, margins, or abstention behavior.

\paragraph{Calibration data and selection.}
\citet{williams-aletras-2024-impact} show that calibration data affects pruning and quantization, while self-calibration uses the model itself to generate calibration data for quantization and pruning \citep{williams2025self}. We study a different objective: selecting calibration data to preserve full-precision uncertainty behavior. Related data-selection methods, including uncertainty sampling, core-set selection, and gradient- or representation-diversity selection, have been studied for data efficiency \citep{settles2009active,sener2018active,ash2020badge}. These objectives are useful controls, but they do not directly target quantized-vs-\fp option-probability preservation; we therefore include them as baselines.

\paragraph{Confidence calibration.}
Calibration has a long history in probabilistic prediction and evaluation \citep{brier1950verification,niculescu2005predicting,guo2017calibration,nixon2019measuring,platt1999probabilistic,zadrozny2002transforming,naeini2015obtaining,kull2019dirichlet,kumar2019verified,minderer2021revisiting}. Selective prediction and uncertainty estimation have also been widely studied \citep{chow1970optimum,elyaniv2010foundations,geifman2017selective,ovadia2019trust}. For language models, prior work studies confidence, self-knowledge, calibration, and ``know-when-you-do-not-know'' behavior in QA, prompting, and tuning settings \citep{jiang-etal-2020-know,zhao2021calibrate,desai-durrett-2020-calibration,xiao-etal-2022-uncertainty,kadavath2022language,kapoor2024large,kapoor2024calibration}. These works mainly analyze \fp models, prompting, or direct tuning, rather than calibration-data choice during quantization.

\paragraph{Post-hoc calibration.}
\citet{proskurina-etal-2024-quantization} document confidence shifts under low-bit compression, and \citet{zhong-etal-2025-quantized} propose soft-prompt post-hoc calibration. These works intervene after quantization is fixed. Temperature-style maps preserve the argmax and cannot repair decision or answerability flips; flexible score-space calibrators can change decisions but may move the model away from \fp behavior \citep{guo2017calibration,zadrozny2002transforming,naeini2015obtaining,kull2019dirichlet}. Our work instead studies calibration-data selection before quantization.

\section{Problem Statement}
\label{sec:setup}

We formulate calibration-data selection for quantization as uncertainty preservation with a task-dependent target. \S\ref{sec:risks} defines distributional and boundary risks. \S\ref{sec:boundary_fragility} explains why low-margin examples are fragile under quantization perturbations. \S\ref{sec:no_universal} gives a mixture-mismatch argument that motivates the \dpq design space in \S\ref{sec:method}.

\subsection{Preservation Risks}
\label{sec:risks}

\paragraph{Option scoring.}
For an input $x$ with candidate options $\calY(x)=\{y_1,\ldots,y_K\}$, a full-precision model assigns a score $s_0(x,y_i)$ to each option and induces
\begin{equation}
 p_0(y_i\mid x)=\frac{\exp s_0(x,y_i)}{\sum_{j=1}^K \exp s_0(x,y_j)},
 \label{eq:fp_prob}
\end{equation}
where $s_0$ is computed from conditional language-model likelihood. A quantized model produced with calibration set $\calD$ induces scores $s^Q_{\calD}(x,y_i)$ and probabilities $p^Q_{\calD}(y_i\mid x)$ analogously. Standard accuracy checks whether the top option matches the label; we instead ask whether $p^Q_{\calD}$ preserves the \emph{uncertainty behavior} of $p_0$. In \squad, answerability is scored as a two-option choice, and $p(\mathrm{answerable})$ denotes the softmax probability of the answerable option.

\paragraph{Risks.}
Expectations are over the target evaluation distribution. We define two preservation risks. The first is
\begin{equation}
  \mathcal{R}_{\mathrm{dist}}(\calD)=\mathbb{E}_{x}\big[\jsd(p^Q_{\calD}(\cdot\mid x),p_0(\cdot\mid x))\big],
\end{equation}
which measures quantization-induced change in the option distribution. The second is
\begin{equation}
  \mathcal{R}_{\mathrm{bdry}}(\calD)=\mathbb{E}_{x}\big[\mathbf{1}\{a^Q_{\calD}(x)\neq a_0(x)\}\,w(x)\big],
\end{equation}
where $a_0(x)=\arg\max_y p_0(y\mid x)$ and $a^Q_{\calD}(x)=\arg\max_y p^Q_{\calD}(y\mid x)$ are the \fp and quantized decisions, and $w(x)\geq 0$ upweights answerability or low-margin cases. $\mathcal{R}_{\mathrm{dist}}$ matters when the quantized model should act as a drop-in replacement for an \fp model, as in broad \mcqa, reranking, or ensembling. $\mathcal{R}_{\mathrm{bdry}}$ matters when answerability or abstention decisions are central, as in \squad answerability, safety filters, or selective prediction.

\paragraph{Metrics.}
Empirically, we instantiate these risks with ECE \citep{guo2017calibration}, adaptive ECE \citep{nixon2019measuring}, NLL, and Brier score \citep{brier1950verification}, together with \fp-behavior metrics: \fp agreement, $\jsd(p^Q_{\calD},p_0)$, confidence shift, top-two margin drift, and margin correlation. For \squad, we also report boundary-accuracy deviation and answerability-rate deviation, which measure how far the quantized answer/abstain decisions and answer rate move from \fp. These metrics evaluate fidelity to the full-precision reference rather than improved ground-truth calibration: \fp agreement and JSD-to-\fp measure whether a quantized model can replace an already validated \fp model in downstream score-consuming pipelines.

\subsection{Boundary Fragility}
\label{sec:boundary_fragility}

Uncertainty behavior is most fragile near decision boundaries. Let $s_0(x)\in\R^K$ and $s^Q_{\calD}(x)\in\R^K$ be the \fp and quantized option-score vectors, and define
$\Delta_{\calD}(x)=s^Q_{\calD}(x)-s_0(x)$. Let $i^*$ and $j^*$ be the \fp top and runner-up options, with margin
$\gamma(x)=s_0(x,i^*)-s_0(x,j^*)>0$.

\begin{proposition}[Top-two boundary fragility]
\label{prop:boundary}
If $\Delta_{\calD}(x,j^*)-\Delta_{\calD}(x,i^*)>\gamma(x)$, then the relative ordering of $i^*$ and $j^*$ is reversed by quantization. Conversely, if $\|\Delta_{\calD}(x)\|_\infty<\gamma(x)/2$, the \fp top-1 decision is preserved.
\end{proposition}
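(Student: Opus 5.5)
Both parts follow from one identity: write the quantized scores as $s^Q_{\calD}(x,\cdot)=s_0(x,\cdot)+\Delta_{\calD}(x,\cdot)$ and look at the pairwise score differences. Since the softmax in \eqref{eq:fp_prob} is strictly monotone in the scores, the ordering of any two options under $p^Q_{\calD}$ equals the ordering of their quantized scores. So the whole argument can be run in score space and never needs to touch probabilities.

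\textbf{Reversal claim.} I would compute
\[
s^Q_{\calD}(x,i^*)-s^Q_{\calD}(x,j^*)=\gamma(x)-\bigl(\Delta_{\calD}(x,j^*)-\Delta_{\calD}(x,i^*)\bigr).
\]
Under the hypothesis the right-hand side is strictly negative. Hence $s^Q_{\calD}(x,j^*)>s^Q_{\calD}(x,i^*)$, and so $p^Q_{\calD}(j^*\mid x)>p^Q_{\calD}(i^*\mid x)$, which is the claimed reversal. I would note in passing that this does not assert $j^*$ becomes the quantized top-1, since a third option could overtake both. The statement only concerns the relative order of $i^*$ and $j^*$, so nothing more is needed.

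\textbf{Preservation claim.} Fix any $k\neq i^*$. Because $j^*$ is the \fp runner-up, $s_0(x,i^*)-s_0(x,k)\ge\gamma(x)$. Then
\[
s^Q_{\calD}(x,i^*)-s^Q_{\calD}(x,k)\ge \gamma(x)-|\Delta_{\calD}(x,i^*)|-|\Delta_{\calD}(x,k)|>\gamma(x)-2\cdot\tfrac{\gamma(x)}{2}=0,
\]
where the last step uses $\|\Delta_{\calD}(x)\|_\infty<\gamma(x)/2$. Thus $i^*$ remains the strict quantized argmax, so $a^Q_{\calD}(x)=a_0(x)$ and the top-1 decision is preserved.

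\textbf{Main obstacle.} The only step needing care is extending the margin bound from the pair $(i^*,j^*)$ to all $k\neq i^*$. This is exactly where the definition of $j^*$ as the runner-up is used, and where the factor $1/2$ enters, since both coordinates may move adversarially. I would also note that ties are excluded because $\gamma(x)>0$ and the inequalities are strict, so the argmax is well defined throughout.
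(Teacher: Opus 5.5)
Your proof is correct and takes essentially the same route as the paper's. For the reversal you use the pairwise identity $s^Q_{\calD}(x,i^*)-s^Q_{\calD}(x,j^*)=\gamma(x)-(\Delta_{\calD}(x,j^*)-\Delta_{\calD}(x,i^*))$, and for preservation you use the runner-up bound $s_0(x,i^*)-s_0(x,k)\ge\gamma(x)$ for every $k\neq i^*$ together with the $\ell_\infty$ hypothesis; your explicit appeal to softmax monotonicity to work in score space is a small clarification the paper leaves implicit.
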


\noindent Thus, small-margin examples can flip under small perturbations, while large-margin examples are stable. This does not imply that \gptq directly optimizes option scores; it shows where reconstruction error becomes \emph{behavioral} error. Generic-text calibration may reduce average reconstruction error while missing activation directions that control answerability and confidence. Boundary-aware calibration exposes these fragile directions to the quantizer.

\subsection{Mixture Mismatch}
\label{sec:no_universal}

The two risks emphasize different parts of the input space. We make this explicit with a mixture view.

\paragraph{Mixture view.}
Let $q_{\mathrm{bdry}}$ denote the distribution over boundary or high-doubt examples, and let $q_{\mathrm{anchor}}$ denote the distribution over generic anchors, such as WikiText or random QA. A calibration recipe is $q_r = r\,q_{\mathrm{bdry}} + (1-r)\,q_{\mathrm{anchor}}$, where $r\in[0,1]$ is the boundary ratio used by \dpq; for example, \dpq-r75 corresponds to $r{=}0.75$. The target preservation distribution is
$T_\theta=\theta q_{\mathrm{bdry}}+(1-\theta)q_{\mathrm{anchor}}$. Answerability-boundary preservation has large $\theta$, while broad answerable \mcqa preservation has small $\theta$.

\paragraph{Fixed recipes.}
Generic-text calibration, such as WikiText or C4, corresponds roughly to $r\approx 0$. Boundary-only and answerability-only variants correspond to $r\approx 1$. Hard-example mining selects a different region: an example can be confidently wrong, with high NLL and large margin, without being uncertain; hence high-NLL $\neq$ high-doubt. Post-hoc calibration intervenes at a different stage: temperature maps preserve argmax and cannot repair flips, while flexible maps can change decisions but may move the model away from \fp behavior. These fixed choices cannot match every target; formal statements appear in Appendix~\ref{app:formal}.

\begin{proposition}[Mixture-ratio mismatch]
\label{prop:mismatch}
Let $\ell_{\calD}(x)\in[0,1]$ be a bounded preservation loss for calibration set $\calD$, and define $R_P(\calD)=\mathbb{E}_{x\sim P}[\ell_{\calD}(x)]$. For any two distributions $T$ and $q$,
\[
|R_T(\calD)-R_q(\calD)|\leq \mathrm{TV}(T,q).
\]
If $q_{\mathrm{bdry}}$ and $q_{\mathrm{anchor}}$ have disjoint support, then
\begin{equation}
\mathrm{TV}(T_\theta,q_r)=|\theta-r|,
\end{equation}
so the mismatch upper bound is minimized at $r^*=\theta$. Consequently, if two targets have different $\theta_1\neq\theta_2$, no single $r$ minimizes mismatch to both.
\end{proposition}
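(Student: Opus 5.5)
The plan has three steps, taken in order. First, the generic TV bound, by the standard variational characterization of total variation for $[0,1]$-valued functions. Second, an explicit computation of $\mathrm{TV}(T_\theta,q_r)$ under disjoint supports. Third, the two consequences, which are elementary facts about $|\theta-r|$.

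For the first claim, fix $\calD$ and write $f=\ell_{\calD}$, a measurable function with values in $[0,1]$. Let $\mu$ be any dominating measure, for instance $T+q$, with densities $t$ and $g$. Then
\[
R_T(\calD)-R_q(\calD)=\int f\,(t-g)\,d\mu .
\]
Split the domain into $A=\{t>g\}$ and its complement. Since $0\le f\le 1$, the integral is at most $\int_A (t-g)\,d\mu=\mathrm{TV}(T,q)$. It is also at least $-\int_{A^c}(g-t)\,d\mu=-\mathrm{TV}(T,q)$, using that both densities integrate to one, so the positive and negative parts of $t-g$ have equal mass. Here $\mathrm{TV}$ is taken with the convention $\sup_A|T(A)-q(A)|=\tfrac12\|T-q\|_1$. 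I would state this convention explicitly, since with the unnormalized $L^1$ convention the constant would change.

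For the second claim, let $S_b$ and $S_a$ be disjoint sets with $q_{\mathrm{bdry}}(S_b)=1$ and $q_{\mathrm{anchor}}(S_a)=1$. Then
\[
T_\theta-q_r=(\theta-r)\,q_{\mathrm{bdry}}-(\theta-r)\,q_{\mathrm{anchor}}=(\theta-r)\,(q_{\mathrm{bdry}}-q_{\mathrm{anchor}}).
\]
Evaluating on $A=S_b$ gives $|T_\theta(A)-q_r(A)|=|\theta-r|$. For an arbitrary $A$, the difference $T_\theta(A)-q_r(A)$ equals $(\theta-r)\big(q_{\mathrm{bdry}}(A)-q_{\mathrm{anchor}}(A)\big)$, whose absolute value is at most $|\theta-r|$ because both measures take values in $[0,1]$. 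Hence the supremum is exactly $|\theta-r|$. Equivalently, $\tfrac12\|q_{\mathrm{bdry}}-q_{\mathrm{anchor}}\|_1=1$ for mutually singular probability measures.

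The consequences are then immediate. The function $r\mapsto|\theta-r|$ on $[0,1]$ has the unique minimizer $r^*=\theta$, where it attains the value $0$. If $\theta_1\neq\theta_2$, the unique minimizers differ, so no single $r$ attains the minimum for both targets. In fact, for any $r$,
\[
|\theta_1-r|+|\theta_2-r|\ge|\theta_1-\theta_2|>0
\]
by the triangle inequality, which quantifies the unavoidable mismatch.

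The main obstacle is not technical but interpretive. The statement minimizes an \emph{upper bound}, not the risk itself, and the TV normalization must match the constant in the first inequality. I would flag that the conclusion concerns the mismatch bound, and that disjoint support is what makes it tight. Without disjointness, $\mathrm{TV}(T_\theta,q_r)=|\theta-r|\,\mathrm{TV}(q_{\mathrm{bdry}},q_{\mathrm{anchor}})$, which is still minimized at $r=\theta$ whenever the two base distributions differ.
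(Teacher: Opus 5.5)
Your proposal is correct and follows the same route as the paper. The paper gets the first bound by citing the variational characterization $\sup_{0\le f\le 1}|\mathbb{E}_T f-\mathbb{E}_q f|=\mathrm{TV}(T,q)$ with $f=\ell_{\calD}$, and it simply asserts $\mathrm{TV}(T_\theta,q_r)=|\theta-r|$ under disjoint support; you verify both directly, via the set $\{t>g\}$ and the factorization $T_\theta-q_r=(\theta-r)(q_{\mathrm{bdry}}-q_{\mathrm{anchor}})$, and your explicit TV normalization and the non-disjoint remark $|\theta-r|\,\mathrm{TV}(q_{\mathrm{bdry}},q_{\mathrm{anchor}})$ are sound additions.
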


\noindent The proof and behavioral-risk surrogate appear in Appendix~\ref{app:finite_budget_tradeoff}. This result should be read as a design principle rather than an exact predictor of the best ratio for a given quantizer. It motivates evaluating target-dependent mixtures and interpreting the empirical winner as a target--recipe match, rather than expecting one fixed recipe to dominate all targets. \dpq provides this design space, and the experiments test the target--recipe match.

\section{Method}
\label{sec:method}

\begin{algorithm}[t]
\caption{\dpq Calibration Data Selection}
\label{alg:dpq}
\begin{algorithmic}[1]
\Require \fp model $M$; candidate pool $C$; calibration size $s$; mixture ratio $r$
\Ensure Calibration set $\calD$ with $|\calD|=s$
\State Score each $x\in C$ with $M$ to obtain $p_0(\cdot\mid x)$
\State Compute $b(x)=1-\big(p_{0,(1)}(x)-p_{0,(2)}(x)\big)$ for each $x\in C$
\State Select $C_{\mathrm{hi}}$ as the top $\lfloor sr\rfloor$ candidates by $b(x)$, with answerability balancing for \squad boundary data
\State Draw $C_{\mathrm{anc}}$ with $s-\lfloor sr\rfloor$ anchors from WikiText or RandomQA
\State Build $D_{\mathrm{hi}}$ from $C_{\mathrm{hi}}$ using the original prompt and \fp top option or options
\State Set $\calD \gets D_{\mathrm{hi}}\cup C_{\mathrm{anc}}$
\State Run unchanged \gptq with calibration set $\calD$
\end{algorithmic}
\end{algorithm}

\dpq is a pre-quantization calibration-data selection strategy for \gptq-style post-training quantization. Algorithm~\ref{alg:dpq} gives the procedure. \dpq does not modify the quantizer kernel, bit-width, reconstruction objective, or inference path; it only changes the calibration strings used to estimate activation statistics. We describe the method through the calibration-selection view in \S\ref{sec:method_problem} and the doubt-based criterion in \S\ref{sec:method_criterion}.

\subsection{Calibration Selection}
\label{sec:method_problem}

\gptq estimates layer-wise activation statistics from a calibration set $\calD$ to solve a local reconstruction problem, so $\calD$ determines which activation regions are represented accurately. We view calibration selection as choosing $\calD$ from a distribution $q$ that approximates the target preservation distribution $T_\theta$ from \S\ref{sec:no_universal}. Proposition~\ref{prop:mismatch} motivates allocating calibration mass according to the target mixture over boundary and anchor components.

Our candidate pool $C$ combines training-split ARC-Challenge examples, which provide answerable QA structure, with \squad answerability examples, which provide balanced answerable and unanswerable boundary cases. In our implementation, $C$ contains 512 candidates from each source before model-specific scoring. The \fp model scores $C$ once under the evaluation option-scoring protocol, producing $\{p_0(\cdot\mid x)\}_{x\in C}$; this is an offline selection cost and adds no inference-time overhead. Evaluation examples are never used for calibration, and the same pool is reused across all \dpq variants. This setup reflects the deployment view of calibration selection: the pool provides target-relevant candidates, while \dpq determines which examples within that pool best preserve the desired \fp behavior.

\subsection{Doubt-Based Selection}
\label{sec:method_criterion}

For a candidate $x$, let $p_{0,(1)}(x)$ and $p_{0,(2)}(x)$ denote the largest and second-largest \fp option probabilities. We define the doubt score
\begin{equation}
  b(x)=1-\big(p_{0,(1)}(x)-p_{0,(2)}(x)\big),
  \label{eq:boundary_score}
\end{equation}
which assigns high values to low-margin examples, the fragile region identified by Proposition~\ref{prop:boundary}. To isolate the effect of each signal, we also evaluate component variants that replace $b(x)$ with $1{-}p_{0,(1)}(x)$ for confidence-only, $H(p_0(\cdot\mid x))$ for entropy-only, answerability log-odds for answerability-only, or gold-label NLL for hard-example mining.

Given budget $s$ and mixture ratio $r\in[0,1]$, \dpq selects the top $\lfloor sr\rfloor$ candidates by $b(x)$ as boundary strings and draws the remaining $s-\lfloor sr\rfloor$ anchor strings from WikiText-style text and RandomQA-style calibration. Each boundary string concatenates the original prompt with the \fp top option or options, while anchors are kept unchanged. This construction is fixed across \dpq variants, so the ablations compare selection signal, ratio, and budget under the same calibration-string design. The ratio $r$ is the mixture weight from \S\ref{sec:no_universal}: higher $r$ targets boundary-heavy settings such as \squad answerability, and lower $r$ targets broader answerable behavior. We evaluate ratio, size, boundary, and component variants as controlled \dpq recipes in Section~\ref{sec:experiments}. Because \dpq only changes calibration strings, the resulting model has the same form as a standard \gptq baseline and incurs no additional inference cost. Thus, \dpq is best viewed as a target-aware recipe family rather than a single universal calibration set.

\section{Experiment Results}
\label{sec:experiments}
\subsection{Setup}

\paragraph{Evaluation scope.}
We evaluate 8 LMs across three model families and multiple scales: Qwen2.5-0.5B/1.5B/3B/7B \citep{yang2024qwen2}; Llama-3.2-1B, Llama-3.2-3B \citep{meta2024llama32}, and Llama-3.1-8B \citep{grattafiori2024llama3herd}; and Mistral-7B-v0.3 \citep{jiang2023mistral}. The benchmark suite contains 9 NLP datasets organized by preservation target. The old-core suite includes ARC-Challenge \citep{clark2018arc}, \squad answerability \citep{rajpurkar-etal-2018-know}, and TruthfulQA \citep{lin-etal-2022-truthfulqa}. Within this suite, \squad is the primary boundary-preservation benchmark because it directly tests answerable versus unanswerable decisions, while ARC-Challenge and TruthfulQA provide additional checks of score behavior. The extra-\mcqa suite includes ARC-Easy \citep{clark2018arc}, BoolQ \citep{clark-etal-2019-boolq}, PIQA \citep{bisk2020piqa}, HellaSwag \citep{zellers-etal-2019-hellaswag}, OpenBookQA \citep{mihaylov-etal-2018-suit}, and CommonsenseQA \citep{talmor-etal-2019-commonsenseqa}; since these examples are answerable, they primarily test broad distributional preservation.

\paragraph{Methods and ranking.}
For each model--dataset pair, the main comparison includes one full-precision reference and 22 comparison methods: \bnb{}; generic and task-formatted \gptq{} calibration such as WikiText, C4, RandomQA, and TaskRandom; \dpq{} ratio, size, boundary, and component variants; data-selection baselines; negative controls; and post-hoc controls. All main quantized baselines use 4-bit quantization; \bnb{} is included as an NF4 quantizer-family baseline, while the \gptq{} variants differ only in calibration data. Full method inventory, prompt formats, and quantization hyperparameters are provided in Appendix~\ref{app:repro}. We rank methods using target-aligned metrics: for \squad boundary preservation, boundary-accuracy deviation, answerability-rate deviation, \fp agreement, and JSD; for broad \mcqa preservation, \fp agreement, JSD, margin drift, confidence shift, accuracy deviation, and margin correlation. Within each target, metrics are equally weighted as a neutral summary because no deployment-specific utility function is assumed. Ranks are averaged over the 8 models and used only as compact summaries; our interpretations rely on component metrics and target-specific trends rather than on rank alone. Raw metrics and per-dataset breakdowns are in Appendix~\ref{app:full_results}.

\begin{table}[t]
\centering
\small
\begin{tabular}{@{}lcccc@{}}
\toprule
\multirow{2}{*}{Method} & \multicolumn{4}{c}{Worse than \fp{} (\%, $\downarrow$)} \\
\cmidrule(l){2-5}
& Acc. & ECE & NLL & Brier \\
\midrule
\bnb{}            & 84.7 & 66.7 & 84.7 & 81.9 \\
\gptq{}-RandomQA & 83.3 & 61.1 & 76.4 & 86.1 \\
\gptq{}-WikiText & 76.4 & 58.3 & 63.9 & 83.3 \\
\gptq{}-C4       & 83.3 & 65.3 & 68.1 & 77.8 \\
\midrule
\dpq{}-s128-r75  & 80.6 & 62.5 & 70.8 & 83.3 \\
\dpq{}-s128-r50  & 79.2 & 58.3 & 65.3 & 80.6 \\
\bottomrule
\end{tabular}
\caption{
Low-bit quantization often worsens accuracy and calibration metrics.
Entries show the percentage of model--dataset settings worse than \fp; lower is better.
Full results are in Table~\ref{tab:gap_full}.
}
\label{tab:gap}
\end{table}

\begin{figure}[t]
\centering
\includegraphics[width=\linewidth]{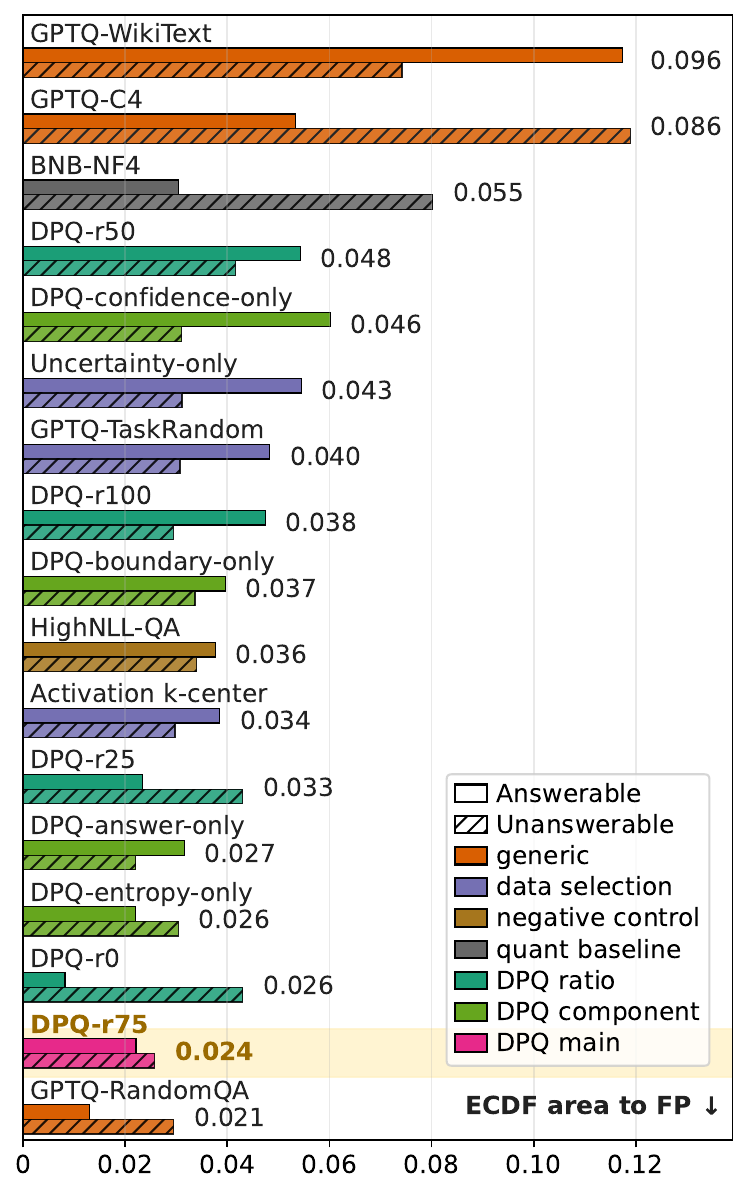}
\caption{
\squad answerability ECDF area to \fp per pre-quantization method, averaged over Qwen2.5-7B, Llama-3.1-8B, and Mistral-7B-v0.3. Numbers denote the mean of the answerable and unanswerable areas. Methods are sorted top-to-bottom by mean area (worst first); \dpq-r75, the best aggregate boundary recipe in Table~\ref{tab:squad}, is highlighted.
}
\label{fig:ecdf-area}
\end{figure}

\subsection{Quantization-Induced Drift}
\label{sec:drift}

Table~\ref{tab:gap} establishes the empirical problem. Across representative quantization and calibration choices, degradation is frequent not only in accuracy but also in calibration-sensitive metrics such as ECE, NLL, and Brier score. Thus, low-bit quantization often changes score behavior, not just final answers. The full degradation table, including confidence shift and margin drift, is provided in Table~\ref{tab:gap_full}. In particular, Table~\ref{tab:correct_wrong} in Appendix~\ref{app:additional_broad} shows that quantization compresses the confidence separation between correct and incorrect cases, although the direction of absolute confidence shifts is task-dependent.

This motivates the target-specific analysis that follows: \emph{given widespread quantization-induced drift, which calibration data choices best preserve the uncertainty behavior required by each deployment target?} We answer this separately for answerability-boundary preservation and broad \mcqa preservation.

\begin{figure*}[t]
\centering

\begin{minipage}[t]{0.62\textwidth}
\vspace{0pt}
\centering
\small
\resizebox{\linewidth}{!}{%
\begin{tabular}{@{}lccccc@{}}
\toprule
\multirow{2}{*}{Method} & \multirow{2}{*}{Rank $\downarrow$}
& \multicolumn{4}{c}{Boundary preservation metrics} \\
\cmidrule(l){3-6}
& & Agr. $\uparrow$ & Acc$\Delta$ $\downarrow$ & Rate$\Delta$ $\downarrow$ & JSD $\downarrow$ \\
\midrule
\textbf{\dpq{}-s128-r75} & \textbf{6.20} & \textbf{0.8495} & \textbf{0.2125} & \textbf{0.1000} & \textbf{0.0158} \\
\dpq{}-boundary-random    & 8.15  & 0.8413 & 0.2310 & 0.1135 & 0.0174 \\
\dpq{}-answerability-only & 8.18  & 0.8286 & 0.2592 & 0.1281 & 0.0205 \\
\dpq{}-entropy-only       & 8.68  & 0.8270 & 0.2940 & 0.1470 & 0.0227 \\
\dpq{}-boundary-only      & 9.06  & 0.8335 & 0.2520 & 0.1242 & 0.0197 \\
Uncertainty-only          & 9.36  & 0.8261 & 0.2937 & 0.1461 & 0.0181 \\
Activation k-center       & 9.39  & 0.8360 & 0.2530 & 0.1255 & 0.0187 \\
\dpq{}-s128-r0            & 10.31 & 0.8097 & 0.3205 & 0.1598 & 0.0220 \\
\gptq{}-RandomQA          & 11.09 & 0.8266 & 0.2733 & 0.1351 & 0.0231 \\
\dpq{}-s128-r100          & 12.44 & 0.8080 & 0.3340 & 0.1655 & 0.0215 \\
\dpq{}-s128-r50           & 13.12 & 0.8140 & 0.3210 & 0.1602 & 0.0258 \\
\gptq{}-WikiText          & 17.29 & 0.7446 & 0.4553 & 0.2271 & 0.0387 \\
\bottomrule
\end{tabular}}
\captionof{table}{
\squad answerability-boundary preservation.
Rank aggregates agreement with \fp (Agr.), boundary-accuracy deviation (Acc$\Delta$), answerability-rate deviation (Rate$\Delta$), and JSD over 8 models; lower is better except Agr.
Full ablations are in Table~\ref{tab:bdry_ablations}.
}
\label{tab:squad}
\end{minipage}
\hfill
\begin{minipage}[t]{0.36\textwidth}
\vspace{0pt}
\centering
\includegraphics[width=\linewidth]{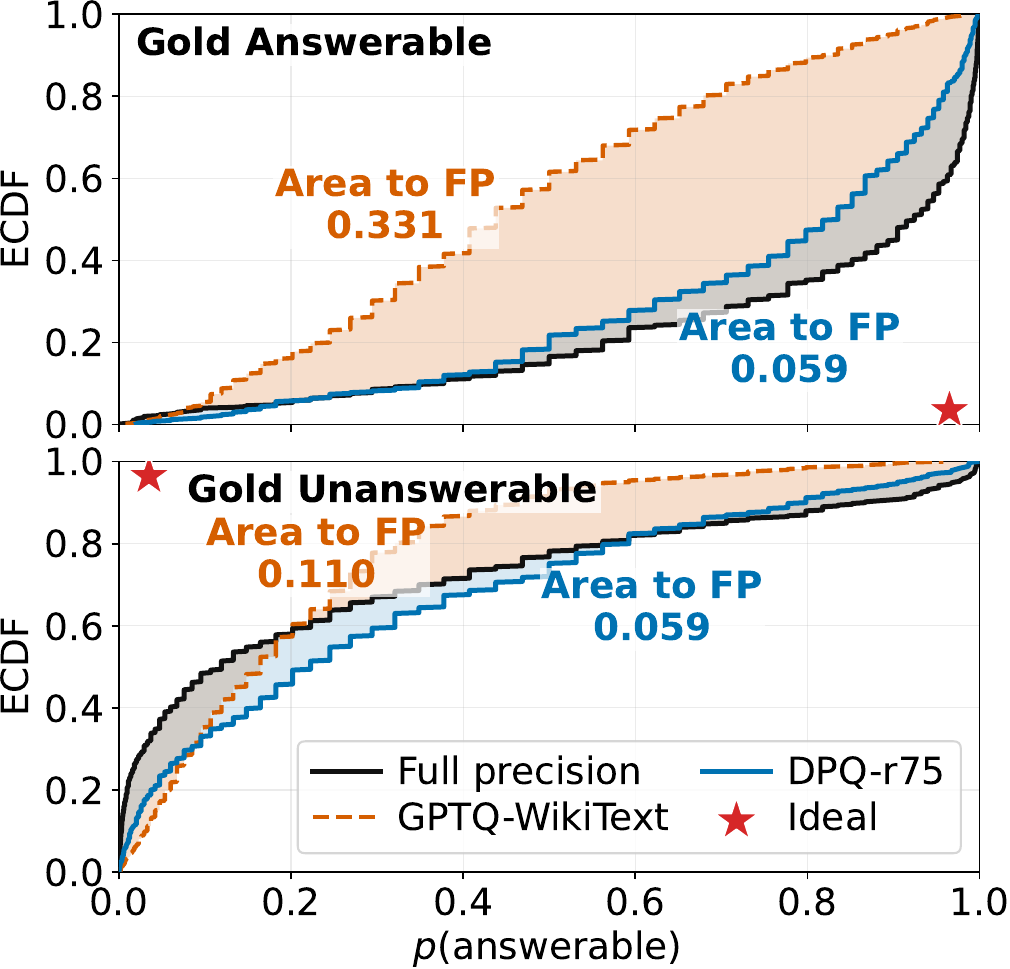}
\captionof{figure}{
\squad answerability ECDFs for Llama-3.1-8B.
Shaded regions show area to \fp; smaller is better.
Red stars mark ideal behavior.
}
\label{fig:llama-answerability-dist}
\end{minipage}

\end{figure*}

\subsection{Answerability Boundary}
\label{sec:squad_result}

Table~\ref{tab:squad} reports the core \squad answerability-boundary result. \dpq-s128-r75 is the strongest listed pre-quantization recipe, with the best aggregate rank, highest agreement with \fp, and smallest boundary-accuracy deviation, answerability-rate deviation, and JSD. Compared with \gptq-WikiText, it reduces boundary-accuracy deviation from 0.4553 to 0.2125, answerability-rate deviation from 0.2271 to 0.1000, and JSD from 0.0387 to 0.0158. The conclusion does not rely on rank alone: the same recipe also improves the absolute boundary and distributional metrics. Figure~\ref{fig:ecdf-area} gives a complementary deployment-scale view, showing that \dpq-r75 is among the recipes closest to the \fp answerability distribution. Together, these results show that boundary-heavy targets benefit from high-doubt calibration examples, while the mixed r75 recipe also indicates that generic anchors are needed. The ordering remains stable after removing the Llama-3.2-1B stress case; see Table~\ref{tab:squad_nollama}.

\paragraph{Distributional mechanism.}
Figure~\ref{fig:llama-answerability-dist} gives a distributional view of the \squad result on Llama-3.1-8B. For gold-answerable examples, $p(\mathrm{answerable})$ should remain near one; for gold-unanswerable examples, it should remain near zero. Generic \gptq-WikiText calibration shifts the answerability distribution away from \fp, especially on answerable examples, while \dpq-r75 moves it closer to the \fp reference. This illustrates why \squad answerability is not simply a harder \mcqa setting: an answerability flip changes whether the system should answer at all. The pattern is consistent with using a boundary-heavy mixture for this target, while the ablations below show that anchors are still needed for stability. Additional ECDF diagnostics for Qwen2.5-7B and Mistral-7B-v0.3 show the same qualitative pattern in Appendix~\ref{app:bdry_ablations}.

\begin{table}[t]
\centering
\small
\begingroup
\renewcommand{\tabularxcolumn}[1]{m{#1}}
\begin{tabularx}{\columnwidth}{
>{\centering\arraybackslash}m{0.15\columnwidth}
>{\raggedright\arraybackslash}X
}
\toprule
\textbf{Check} & \textbf{Main finding} \\
\midrule
\rowcolor{rowgray}
Boundary ratio $r$
& \dpq-r75 performs best. A high boundary ratio helps, but using only boundary examples over-concentrates the calibration set. \\
Calibration size $s$
& The composition of the calibration set matters more than simply increasing the calibration budget. \\
\rowcolor{rowgray}
Boundary mixing
& Mixed calibration outperforms boundary-only variants, suggesting that generic anchors stabilize boundary-focused selection. \\
Single-signal variants
& Confidence-only, entropy-only, and answerability-only variants are useful, but none matches the mixed \dpq-r75 recipe. \\
\rowcolor{rowgray}
Negative controls
& HighNLL-QA and LowDoubt-QA underperform \dpq-r75, showing that high doubt is different from ordinary hard- or easy-example selection. \\
\bottomrule
\end{tabularx}
\endgroup
\caption{
Ablation summary for the \squad boundary target.
Full results are in Appendix~\ref{app:bdry_ablations}.
}
\label{tab:ablation_summary}
\end{table}

\paragraph{Ablation summary.}
The ablations in Appendix~\ref{app:bdry_ablations} test ratio, size, boundary mixing, single-signal components, and negative controls. As summarized in Table~\ref{tab:ablation_summary}, the gains mainly come from calibration-set composition rather than budget size: r75 performs best, r100 over-concentrates on boundary cases, and boundary-only or single-signal variants do not match the mixed recipe. Strong data-selection baselines improve over generic text calibration but still fall below \dpq-r75 on the boundary-specific aggregate. These ablations test the main selection axes of the calibration set; the boundary-string construction is held fixed as part of the \dpq recipe.

\begin{figure}[t]
\centering
\includegraphics[width=\columnwidth]{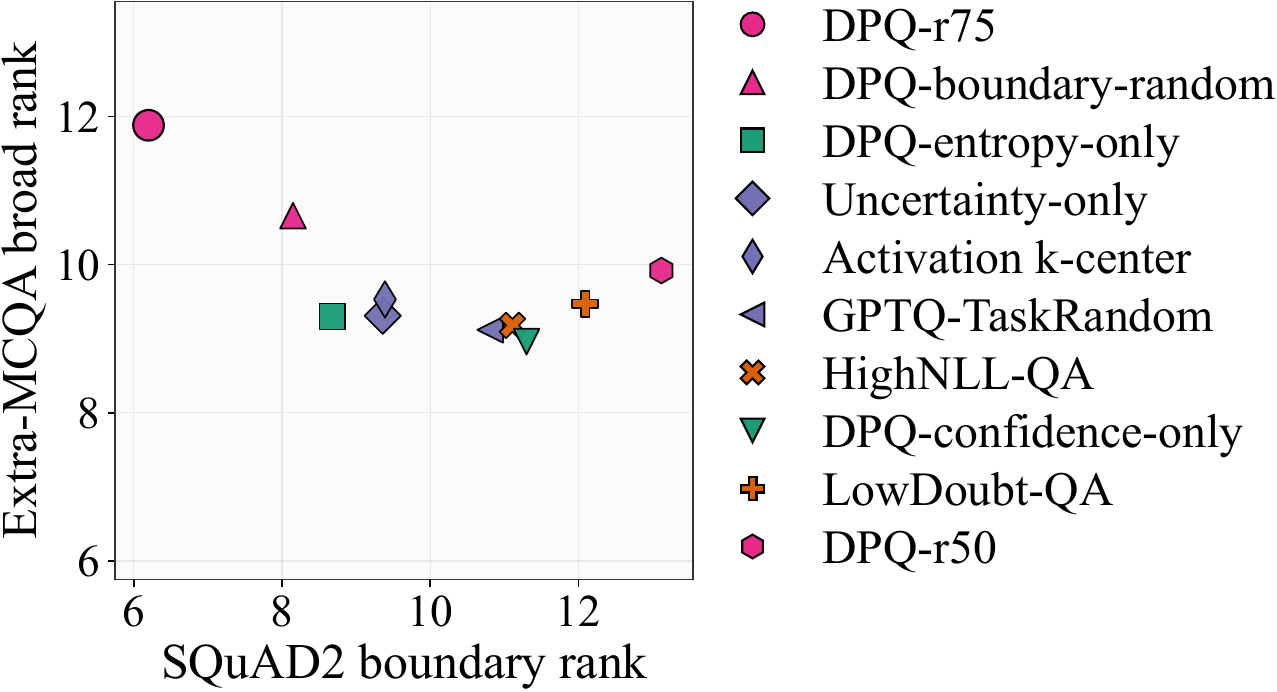}
\caption{
Target-dependent preservation trade-off: \squad boundary rank versus extra-\mcqa broad-preservation rank. Lower is better on both axes; no calibration recipe dominates both targets.
}
\label{fig:tradeoff}
\end{figure}

\subsection{Broad MCQA Trade-off}
\label{sec:mcqa_result}

The six extra-\mcqa benchmarks contain only answerable examples, so the relevant target is broad \fp-behavior preservation ($\mathcal{R}_{\mathrm{dist}}$), not answerability-boundary preservation. Table~\ref{tab:extra} shows that the leading recipes therefore shift away from the high-boundary setting: confidence-only, task-random, HighNLL-QA, entropy-only, uncertainty-only, and \dpq-r50 are more competitive than \dpq-r75. This pattern matches the mixture view: when the target places less mass on answerability boundaries, a smaller boundary ratio or a single uncertainty signal can better preserve broad option-distribution behavior. Figure~\ref{fig:tradeoff} visualizes the same target dependence: \dpq-r75 is stronger on the \squad boundary axis, whereas milder or single-signal recipes are stronger on broad \mcqa. The metric columns also show why Table~\ref{tab:extra} should be read as a trade-off rather than a single scalar leaderboard: \bnb has the strongest agreement and JSD, whereas other recipes rank higher under the full multi-metric objective. Expanded extra-\mcqa results in Appendix~\ref{app:broad_pres} support the target-dependent recipe shift, while the calibration-gap and correct-vs-wrong confidence diagnostics in Appendix~\ref{app:additional_broad} further show that quantization can alter score behavior in ways not captured by top-1 accuracy alone.

\begin{table}[t]
\centering
\small
\resizebox{\columnwidth}{!}{%
\begin{tabular}{@{}lcccc@{}}
\toprule
Method & Rank $\downarrow$ & Top-5 $\uparrow$ & Agr. $\uparrow$ & JSD $\downarrow$ \\
\midrule
\dpq{}-confidence-only & 8.96 & 7 & 0.8357 & 0.0284 \\
\gptq{}-TaskRandom & 9.12 & 11 & 0.8361 & 0.0276 \\
HighNLL-QA & 9.18 & 6 & 0.8327 & 0.0294 \\
\dpq{}-entropy-only & 9.30 & 10 & 0.8324 & 0.0296 \\
Uncertainty-only & 9.31 & 4 & 0.8318 & 0.0297 \\
LowDoubt-QA & 9.47 & 9 & 0.8328 & 0.0286 \\
Activation k-center & 9.53 & 12 & 0.8349 & 0.0299 \\
\dpq{}-s128-r50 & 9.92 & 4 & 0.8322 & 0.0293 \\
\bnb{} & 10.53 & 12 & 0.8637 & 0.0241 \\
\dpq{}-s128-r75 & 11.88 & 3 & 0.8180 & 0.0324 \\
\bottomrule
\end{tabular}}
\caption{
Broad \mcqa preservation on six answerable datasets.
Rank aggregates agreement with \fp (Agr.), JSD, margin drift, confidence shift, accuracy deviation, and margin correlation; Top-5 counts model--dataset pairs where the method ranks in the top five.
Milder, single-signal, or broad-selection recipes lead, unlike on the \squad boundary target.
}
\label{tab:extra}
\end{table}

\paragraph{Practical takeaway.}
In deployment, the recipe should be chosen by the preservation metric that matches the downstream use. If validation examples reflect answerability, abstention, or other low-margin decisions, a boundary-heavy mixed recipe such as \dpq-r75 is a strong choice. If the target mainly uses broad option scores on answerable \mcqa tasks, milder or single-signal recipes are more appropriate. Thus, \dpq is best used as a target-aware recipe family, not as a universal calibration set selected by task accuracy alone.

\paragraph{Negative controls.}
The negative controls further clarify the target shift. HighNLL-QA and LowDoubt-QA are competitive on broad \mcqa but not on \squad, matching the distinction between difficulty and doubt formalized in Appendix~\ref{app:hard_example_theory}. Thus, ordinary hard or easy examples can overlap with broad answerable uncertainty, but they are not answerability-boundary examples.

\paragraph{Post-hoc calibration targets a different objective.}
Post-hoc calibration targets label-oriented calibration or task accuracy, whereas our objective is preserving \fp score behavior; Table~\ref{tab:posthoc} illustrates this distinction. Adaptive temperature leaves accuracy and agreement with \fp unchanged, as expected from Proposition~\ref{prop:temp_argmax}, but increases JSD and margin drift. Flexible score-space calibrators improve \mcqa accuracy, yet reduce agreement with \fp and increase distributional drift. Thus, post-hoc calibration and pre-quantization selection are complementary intervention points: the former reshapes scores after quantization, while the latter controls which \fp score behavior the quantized model tends to preserve. The optimally fitted temperature check in Appendix~\ref{app:additional_posthoc} supports the same complementary pattern.

\begin{table}[t]
\centering
\small
\begin{tabular}{@{}lcccc@{}}
\toprule
Method & Acc. $\uparrow$ & Agr. $\uparrow$ & JSD $\downarrow$ & Margin$\Delta$ $\downarrow$ \\
\midrule
Base & 0.6833 & 0.8245 & 0.0320 & 0.1533 \\
\midrule
Adaptive temp. & 0.6833 & 0.8245 & 0.0445 & 0.2385 \\
Option bias & 0.6985 & 0.8147 & 0.0328 & 0.1591 \\
Vector & 0.7029 & 0.7773 & 0.0519 & 0.2344 \\
Matrix & 0.6992 & 0.7723 & 0.0550 & 0.2380 \\
Dirichlet & 0.6989 & 0.7740 & 0.0548 & 0.2400 \\
Isotonic & 0.6991 & 0.7710 & 0.0561 & 0.2329 \\
\bottomrule
\end{tabular}
\caption{
Post-hoc calibration on extra-\mcqa.
Flexible calibrators improve accuracy but reduce agreement with \fp and increase distributional drift.
}
\label{tab:posthoc}
\end{table}

\paragraph{AWQ transfer.}
The AWQ extension serves as a quantizer-family scope check. On 7B/8B-scale models, \dpq-r75 leads on AWQ mean rank, top-1 frequency, and agreement with \fp; its mean rank is 1.97, with 42.9\% top-1 and 73.0\% top-2 frequency. However, the per-dataset breakdown in Table~\ref{tab:awq-per-dataset} shows that score-based metrics can favor generic calibration on some datasets. Thus, the target-aware lens partially transfers, but the exact recipe should be tuned to the quantizer and metric; the per-dataset breakdown is in Appendix~\ref{app:awq-extension} and aggregate summaries are in Appendix~\ref{app:additional_awq}.

\section{Conclusion}

Quantization should not be evaluated only by whether it preserves a model's top-1 answer. In applications that rely on confidence, margins, selective prediction, abstention, or reranking, a quantized model may keep the same prediction while changing the uncertainty behavior used downstream. We study calibration-data selection as a target-dependent uncertainty-preservation problem and show that different targets favor different recipes: high-boundary mixtures such as \dpq-r75 better preserve \squad-style answerability boundaries, while milder mixtures or single-signal variants better preserve broad \mcqa behavior. The central insight is therefore not that one calibration set is universally best, but that calibration data should be selected for the specific \fp score behavior a deployment needs to preserve. More broadly, pre-quantization data selection, post-quantization calibration, and quantizer choice affect preservation differently, so recipes should be tuned to both the target behavior and the quantization method.

\section*{Limitations}
\label{sec:limitations}

\paragraph{Option-scoring uncertainty.}
We operationalize uncertainty through option scoring and answerability. This controlled setting allows confidence, margins, and full-precision agreement to be measured directly, and it covers common uses such as multiple-choice QA, selective prediction, abstention, and reranking. Extending the same preservation perspective to verbalized confidence and long-form generation is an important direction for future work, since uncertainty in those settings may appear in the generated text rather than in option probabilities.

\paragraph{Target-oriented candidate pool.}
Our candidate pool is intentionally target-oriented: it includes answerable QA examples and answerability-style training cases because the main boundary target is answerable/unanswerable preservation. This design matches the goal of studying deployment-aware calibration selection, where the calibration pool should reflect the behavior one aims to preserve. Future work can further test less aligned or domain-shifted candidate pools to study how broadly the same selection principles transfer; in deployment, target-specific gains should also be checked against broad-preservation metrics to ensure that the selected recipe remains appropriate beyond the primary target.

\paragraph{Quantizer and bit-width scope.}
Our main experiments focus on \gptq-style calibration-data selection because \gptq directly uses calibration strings to estimate activation statistics. We include \textsc{AWQ} and \bnb as quantizer-family checks, and the results suggest that some preservation signals transfer while score-based metrics remain quantizer-specific. A broader sweep over quantizers and bit-widths would further refine deployment-specific calibration recipes.

\paragraph{Interaction with post-hoc calibration.}
Our post-hoc analysis covers representative score-space methods to distinguish pre-quantization data selection from post-quantization score repair. These experiments show that the two stages can optimize different objectives and may be complementary. A fuller study of how to combine target-aware calibration selection with post-hoc calibration is a useful direction for future work.

\paragraph{Deployment risk.}
Uncertainty drift after quantization can affect downstream decisions in ways that are not visible from top-1 accuracy alone. In abstention, triage, educational QA, medical or legal assistance, and safety-filtering settings, shifted confidence or answerability boundaries may cause a system to answer when it should defer, or to suppress useful answers. This reinforces the need to audit quantized models not only for accuracy but also for the score behavior consumed by downstream decision modules.

\section*{Acknowledgments}
This work was supported by the Start-up Grant of City University of Hong Kong (Dongguan). Generative AI tools were used only for language polishing and grammatical correction.

\bibliography{ref}

\begin{thebibliography}{55}
\providecommand{\natexlab}[1]{#1}

\bibitem[{Ash et~al.(2020)Ash, Zhang, Krishnamurthy, Langford, and Agarwal}]{ash2020badge}
Jordan~T. Ash, Chicheng Zhang, Akshay Krishnamurthy, John Langford, and Alekh Agarwal. 2020.
\newblock \href {https://openreview.net/forum?id=ryghZJBKPS} {Deep batch active learning by diverse, uncertain gradient lower bounds}.
\newblock In \emph{International Conference on Learning Representations}.

\bibitem[{Ashkboos et~al.(2024)Ashkboos, Mohtashami, Croci, Li, Cameron, Jaggi, Alistarh, Hoefler, and Hensman}]{ashkboos2024quarot}
Saleh Ashkboos, Amirkeivan Mohtashami, Maximilian~L. Croci, Bo~Li, Pashmina Cameron, Martin Jaggi, Dan Alistarh, Torsten Hoefler, and James Hensman. 2024.
\newblock \href {https://proceedings.neurips.cc/paper_files/paper/2024/hash/b5b939436789f76f08b9d0da5e81af7c-Abstract-Conference.html} {{QuaRot}: Outlier-free 4-bit inference in rotated {LLM}s}.
\newblock In \emph{Advances in Neural Information Processing Systems}.

\bibitem[{Bisk et~al.(2020)Bisk, Zellers, Le~Bras, Gao, and Choi}]{bisk2020piqa}
Yonatan Bisk, Rowan Zellers, Ronan Le~Bras, Jianfeng Gao, and Yejin Choi. 2020.
\newblock \href {https://ojs.aaai.org/index.php/AAAI/article/view/6239} {{PIQA}: Reasoning about physical commonsense in natural language}.
\newblock In \emph{Proceedings of the Thirty-Fourth AAAI Conference on Artificial Intelligence}, pages 7432--7439.

\bibitem[{Brier(1950)}]{brier1950verification}
Glenn~W. Brier. 1950.
\newblock \href {https://doi.org/10.1175/1520-0493(1950)078<0001:VOFEIT>2.0.CO;2} {Verification of forecasts expressed in terms of probability}.
\newblock \emph{Monthly Weather Review}, 78(1):1--3.

\bibitem[{Chow(1970)}]{chow1970optimum}
C.~K. Chow. 1970.
\newblock \href {https://doi.org/10.1109/TIT.1970.1054406} {On optimum recognition error and reject tradeoff}.
\newblock \emph{IEEE Transactions on Information Theory}, 16(1):41--46.

\bibitem[{Clark et~al.(2019)Clark, Lee, Chang, Kwiatkowski, Collins, and Toutanova}]{clark-etal-2019-boolq}
Christopher Clark, Kenton Lee, Ming-Wei Chang, Tom Kwiatkowski, Michael Collins, and Kristina Toutanova. 2019.
\newblock \href {https://doi.org/10.18653/v1/N19-1300} {{B}ool{Q}: Exploring the surprising difficulty of natural yes/no questions}.
\newblock In \emph{Proceedings of the 2019 Conference of the North {A}merican Chapter of the Association for Computational Linguistics: Human Language Technologies, Volume 1 (Long and Short Papers)}, pages 2924--2936, Minneapolis, Minnesota. Association for Computational Linguistics.

\bibitem[{Clark et~al.(2018)Clark, Cowhey, Etzioni, Khot, Sabharwal, Schoenick, and Tafjord}]{clark2018arc}
Peter Clark, Isaac Cowhey, Oren Etzioni, Tushar Khot, Ashish Sabharwal, Carissa Schoenick, and Oyvind Tafjord. 2018.
\newblock \href {https://arxiv.org/abs/1803.05457} {Think you have solved question answering? try {ARC}, the {AI2} reasoning challenge}.
\newblock \emph{arXiv preprint arXiv:1803.05457}.

\bibitem[{Desai and Durrett(2020)}]{desai-durrett-2020-calibration}
Shrey Desai and Greg Durrett. 2020.
\newblock \href {https://doi.org/10.18653/v1/2020.emnlp-main.21} {Calibration of pre-trained transformers}.
\newblock In \emph{Proceedings of the 2020 Conference on Empirical Methods in Natural Language Processing (EMNLP)}, pages 295--302, Online. Association for Computational Linguistics.

\bibitem[{Dettmers et~al.(2022)Dettmers, Lewis, Belkada, and Zettlemoyer}]{dettmers2022llmint8}
Tim Dettmers, Mike Lewis, Younes Belkada, and Luke Zettlemoyer. 2022.
\newblock \href {https://arxiv.org/abs/2208.07339} {{LLM.int8()}: 8-bit matrix multiplication for transformers at scale}.
\newblock In \emph{Advances in Neural Information Processing Systems}.

\bibitem[{Dettmers et~al.(2023)Dettmers, Pagnoni, Holtzman, and Zettlemoyer}]{dettmers2023qlora}
Tim Dettmers, Artidoro Pagnoni, Ari Holtzman, and Luke Zettlemoyer. 2023.
\newblock \href {https://arxiv.org/abs/2305.14314} {{QLoRA}: Efficient finetuning of quantized {LLM}s}.
\newblock In \emph{Advances in Neural Information Processing Systems}.

\bibitem[{Dettmers et~al.(2024)Dettmers, Svirschevski, Egiazarian, Kuznedelev, Frantar, Ashkboos, Borzunov, Hoefler, and Alistarh}]{dettmers2024spqr}
Tim Dettmers, Ruslan Svirschevski, Vage Egiazarian, Denis Kuznedelev, Elias Frantar, Saleh Ashkboos, Alexander Borzunov, Torsten Hoefler, and Dan Alistarh. 2024.
\newblock \href {https://openreview.net/forum?id=Q1u25ahSuy} {{SpQR}: A sparse-quantized representation for near-lossless {LLM} weight compression}.
\newblock In \emph{International Conference on Learning Representations}.

\bibitem[{Egiazarian et~al.(2024)Egiazarian, Panferov, Kuznedelev, Frantar, Babenko, and Alistarh}]{egiazarian2024aqlm}
Vage Egiazarian, Andrei Panferov, Denis Kuznedelev, Elias Frantar, Artem Babenko, and Dan Alistarh. 2024.
\newblock \href {https://proceedings.mlr.press/v235/egiazarian24a.html} {Extreme compression of large language models via additive quantization}.
\newblock In \emph{Proceedings of the 41st International Conference on Machine Learning}, volume 235 of \emph{Proceedings of Machine Learning Research}, pages 12284--12303. PMLR.

\bibitem[{El-Yaniv and Wiener(2010)}]{elyaniv2010foundations}
Ran El-Yaniv and Yair Wiener. 2010.
\newblock \href {https://jmlr.org/papers/v11/el-yaniv10a.html} {On the foundations of noise-free selective classification}.
\newblock \emph{Journal of Machine Learning Research}, 11:1605--1641.

\bibitem[{Frantar et~al.(2023)Frantar, Ashkboos, Hoefler, and Alistarh}]{frantar2023gptq}
Elias Frantar, Saleh Ashkboos, Torsten Hoefler, and Dan Alistarh. 2023.
\newblock \href {https://arxiv.org/abs/2210.17323} {{GPTQ}: Accurate post-training quantization for generative pre-trained transformers}.
\newblock In \emph{International Conference on Learning Representations}.

\bibitem[{Geifman and El-Yaniv(2017)}]{geifman2017selective}
Yonatan Geifman and Ran El-Yaniv. 2017.
\newblock \href {https://arxiv.org/abs/1705.08500} {Selective classification for deep neural networks}.
\newblock In \emph{Advances in Neural Information Processing Systems}.

\bibitem[{Grattafiori et~al.(2024)Grattafiori, Dubey, Jauhri, Pandey, Kadian, Al-Dahle, Letman, Mathur, Schelten, Vaughan, Yang, Fan et~al.}]{grattafiori2024llama3herd}
Aaron Grattafiori, Abhimanyu Dubey, Abhinav Jauhri, Abhinav Pandey, Abhishek Kadian, Ahmad Al-Dahle, Aiesha Letman, Akhil Mathur, Alan Schelten, Alex Vaughan, Amy Yang, Angela Fan, and 1 others. 2024.
\newblock \href {https://doi.org/10.48550/arXiv.2407.21783} {The {Llama 3} herd of models}.
\newblock \emph{arXiv preprint arXiv:2407.21783}.

\bibitem[{Guo et~al.(2017)Guo, Pleiss, Sun, and Weinberger}]{guo2017calibration}
Chuan Guo, Geoff Pleiss, Yu~Sun, and Kilian~Q. Weinberger. 2017.
\newblock \href {https://proceedings.mlr.press/v70/guo17a.html} {On calibration of modern neural networks}.
\newblock In \emph{Proceedings of the 34th International Conference on Machine Learning}, pages 1321--1330.

\bibitem[{He et~al.(2026)He, Wen, Zhan, Chen, Cui, Lan, and Wang}]{he2026budgetdraft}
Liang He, Jingbo Wen, Qishi Zhan, Yixiong Chen, Kangning Cui, Qizhen Lan, and Xilu Wang. 2026.
\newblock Budgetdraft: Acceptance-aware multi-view training for sparse-kv speculative decoding.
\newblock \emph{arXiv preprint arXiv:2606.00144}.

\bibitem[{Jiang et~al.(2023)Jiang, Sablayrolles, Mensch, Bamford, Chaplot, Casas, Bressand, Lengyel, Lample, Saulnier, Lavaud, Lachaux, Stock, Le~Scao, Lavril, Wang, Lacroix, and El~Sayed}]{jiang2023mistral}
Albert~Q. Jiang, Alexandre Sablayrolles, Arthur Mensch, Chris Bamford, Devendra~Singh Chaplot, Diego de~las Casas, Florian Bressand, Gianna Lengyel, Guillaume Lample, Lucile Saulnier, L{\'e}lio~Renard Lavaud, Marie-Anne Lachaux, Pierre Stock, Teven Le~Scao, Thibaut Lavril, Thomas Wang, Timoth{\'e}e Lacroix, and William El~Sayed. 2023.
\newblock \href {https://arxiv.org/abs/2310.06825} {Mistral 7b}.
\newblock \emph{arXiv preprint arXiv:2310.06825}.

\bibitem[{Jiang et~al.(2020)Jiang, Xu, Araki, and Neubig}]{jiang-etal-2020-know}
Zhengbao Jiang, Frank~F. Xu, Jun Araki, and Graham Neubig. 2020.
\newblock \href {https://doi.org/10.1162/tacl_a_00324} {How can we know what language models know?}
\newblock \emph{Transactions of the Association for Computational Linguistics}, 8:423--438.

\bibitem[{Kadavath et~al.(2022)Kadavath, Conerly, Askell, Henighan, Drain, Perez, Schiefer, Hatfield-Dodds, DasSarma, Tran-Johnson, Johnston, El-Showk, Jones, Elhage, Hume, Chen, Bai, Bowman, Fort, Ganguli, Hernandez, Jacobson, Kernion, Kravec, Lovitt, Ndousse, Olsson, Ringer, Amodei, Brown, Clark, Joseph, Mann, McCandlish, Olah, and Kaplan}]{kadavath2022language}
Saurav Kadavath, Tom Conerly, Amanda Askell, Tom Henighan, Dawn Drain, Ethan Perez, Nicholas Schiefer, Zac Hatfield-Dodds, Nova DasSarma, Eli Tran-Johnson, Scott Johnston, Sheer El-Showk, Andy Jones, Nelson Elhage, Tristan Hume, Anna Chen, Yuntao Bai, Sam Bowman, Stanislav Fort, and 17 others. 2022.
\newblock \href {https://arxiv.org/abs/2207.05221} {Language models (mostly) know what they know}.
\newblock \emph{arXiv preprint arXiv:2207.05221}.

\bibitem[{Kapoor et~al.(2024{\natexlab{a}})Kapoor, Gruver, Roberts, Collins, Pal, Bhatt, Weller, Dooley, Goldblum, and Wilson}]{kapoor2024large}
Sanyam Kapoor, Nate Gruver, Manley Roberts, Katherine Collins, Arka Pal, Umang Bhatt, Adrian Weller, Samuel Dooley, Micah Goldblum, and Andrew~G Wilson. 2024{\natexlab{a}}.
\newblock Large language models must be taught to know what they don’t know.
\newblock \emph{Advances in Neural Information Processing Systems}, 37:85932--85972.

\bibitem[{Kapoor et~al.(2024{\natexlab{b}})Kapoor, Gruver, Roberts, Pal, Dooley, Goldblum, and Wilson}]{kapoor2024calibration}
Sanyam Kapoor, Nate Gruver, Manley Roberts, Arka Pal, Samuel Dooley, Micah Goldblum, and Andrew Wilson. 2024{\natexlab{b}}.
\newblock Calibration-tuning: Teaching large language models to know what they don’t know.
\newblock In \emph{Proceedings of the 1st Workshop on Uncertainty-Aware NLP (UncertaiNLP 2024)}, pages 1--14.

\bibitem[{Kim et~al.(2024)Kim, Hooper, Gholami, Dong, Li, Shen, Mahoney, and Keutzer}]{kim2024squeezellm}
Sehoon Kim, Coleman Richard~Charles Hooper, Amir Gholami, Zhen Dong, Xiuyu Li, Sheng Shen, Michael~W. Mahoney, and Kurt Keutzer. 2024.
\newblock \href {https://proceedings.mlr.press/v235/kim24f.html} {{SqueezeLLM}: Dense-and-sparse quantization}.
\newblock In \emph{Proceedings of the 41st International Conference on Machine Learning}, volume 235 of \emph{Proceedings of Machine Learning Research}, pages 23901--23923. PMLR.

\bibitem[{Kull et~al.(2019)Kull, Perello~Nieto, K{\"a}ngsepp, Silva~Filho, Song, and Flach}]{kull2019dirichlet}
Meelis Kull, Miquel Perello~Nieto, Markus K{\"a}ngsepp, Telmo Silva~Filho, Hao Song, and Peter Flach. 2019.
\newblock \href {https://proceedings.neurips.cc/paper/2019/hash/8ca01ea920679a0fe3728441494041b9-Abstract.html} {Beyond temperature scaling: Obtaining well-calibrated multiclass probabilities with dirichlet calibration}.
\newblock In \emph{Advances in Neural Information Processing Systems}.

\bibitem[{Kumar et~al.(2019)Kumar, Liang, and Ma}]{kumar2019verified}
Ananya Kumar, Percy Liang, and Tengyu Ma. 2019.
\newblock \href {https://proceedings.neurips.cc/paper_files/paper/2019/hash/f8c0c968632845cd133308b1a494967f-Abstract.html} {Verified uncertainty calibration}.
\newblock In \emph{Advances in Neural Information Processing Systems}.

\bibitem[{Lin et~al.(2024)Lin, Tang, Tang, Yang, Chen, Wang, Xiao, Dang, Gan, and Han}]{lin2023awq}
Ji~Lin, Jiaming Tang, Haotian Tang, Shang Yang, Wei-Ming Chen, Wei-Chen Wang, Guangxuan Xiao, Xingyu Dang, Chuang Gan, and Song Han. 2024.
\newblock \href {https://proceedings.mlsys.org/paper_files/paper/2024/hash/42a452cbafa9dd64e9ba4aa95cc1ef21-Abstract-Conference.html} {{AWQ}: Activation-aware weight quantization for on-device {LLM} compression and acceleration}.
\newblock In \emph{Proceedings of Machine Learning and Systems}, volume~6.

\bibitem[{Lin et~al.(2022)Lin, Hilton, and Evans}]{lin-etal-2022-truthfulqa}
Stephanie Lin, Jacob Hilton, and Owain Evans. 2022.
\newblock \href {https://doi.org/10.18653/v1/2022.acl-long.229} {{T}ruthful{QA}: Measuring how models mimic human falsehoods}.
\newblock In \emph{Proceedings of the 60th Annual Meeting of the Association for Computational Linguistics (Volume 1: Long Papers)}, pages 3214--3252, Dublin, Ireland. Association for Computational Linguistics.

\bibitem[{{Meta AI}(2024)}]{meta2024llama32}
{Meta AI}. 2024.
\newblock \href {https://github.com/meta-llama/llama-models/blob/main/models/llama3_2/MODEL_CARD.md} {{Llama 3.2} model card}.
\newblock Model card for the Llama 3.2 text-only model collection.

\bibitem[{Mihaylov et~al.(2018)Mihaylov, Clark, Khot, and Sabharwal}]{mihaylov-etal-2018-suit}
Todor Mihaylov, Peter Clark, Tushar Khot, and Ashish Sabharwal. 2018.
\newblock \href {https://doi.org/10.18653/v1/D18-1260} {Can a suit of armor conduct electricity? a new dataset for open book question answering}.
\newblock In \emph{Proceedings of the 2018 Conference on Empirical Methods in Natural Language Processing}, pages 2381--2391, Brussels, Belgium. Association for Computational Linguistics.

\bibitem[{Minderer et~al.(2021)Minderer, Djolonga, Romijnders, Hubis, Zhai, Houlsby, Tran, and Lucic}]{minderer2021revisiting}
Matthias Minderer, Josip Djolonga, Rob Romijnders, Frances~Ann Hubis, Xiaohua Zhai, Neil Houlsby, Dustin Tran, and Mario Lucic. 2021.
\newblock \href {https://proceedings.neurips.cc/paper/2021/hash/8420d359404024567b5aefda1231af24-Abstract.html} {Revisiting the calibration of modern neural networks}.
\newblock In \emph{Advances in Neural Information Processing Systems}.

\bibitem[{Naeini et~al.(2015)Naeini, Cooper, and Hauskrecht}]{naeini2015obtaining}
Mahdi~Pakdaman Naeini, Gregory~F. Cooper, and Milos Hauskrecht. 2015.
\newblock \href {https://ojs.aaai.org/index.php/AAAI/article/view/9602} {Obtaining well calibrated probabilities using bayesian binning}.
\newblock In \emph{Proceedings of the Twenty-Ninth AAAI Conference on Artificial Intelligence}, pages 2901--2907.

\bibitem[{Niculescu-Mizil and Caruana(2005)}]{niculescu2005predicting}
Alexandru Niculescu-Mizil and Rich Caruana. 2005.
\newblock \href {https://doi.org/10.1145/1102351.1102430} {Predicting good probabilities with supervised learning}.
\newblock In \emph{Proceedings of the 22nd International Conference on Machine Learning}, pages 625--632.

\bibitem[{Nixon et~al.(2019)Nixon, Dusenberry, Jerfel, Nguyen, Liu, Zhang, and Tran}]{nixon2019measuring}
Jeremy Nixon, Michael~W. Dusenberry, Ghassen Jerfel, Timothy Nguyen, Jeremiah Liu, Linchuan Zhang, and Dustin Tran. 2019.
\newblock \href {https://doi.org/10.48550/arXiv.1904.01685} {Measuring calibration in deep learning}.
\newblock In \emph{CVPR Workshops}.

\bibitem[{Ovadia et~al.(2019)Ovadia, Fertig, Ren, Nado, Sculley, Nowozin, Dillon, Lakshminarayanan, and Snoek}]{ovadia2019trust}
Yaniv Ovadia, Emily Fertig, Jie Ren, Zachary Nado, D.~Sculley, Sebastian Nowozin, Joshua~V. Dillon, Balaji Lakshminarayanan, and Jasper Snoek. 2019.
\newblock \href {https://proceedings.neurips.cc/paper/2019/hash/8558cb408c1d76621371888657d2eb1d-Abstract.html} {Can you trust your model's uncertainty? evaluating predictive uncertainty under dataset shift}.
\newblock In \emph{Advances in Neural Information Processing Systems}.

\bibitem[{Platt(1999)}]{platt1999probabilistic}
John~C. Platt. 1999.
\newblock Probabilistic outputs for support vector machines and comparisons to regularized likelihood methods.
\newblock \emph{Advances in Large Margin Classifiers}, pages 61--74.

\bibitem[{Proskurina et~al.(2024)Proskurina, Brun, Metzler, and Velcin}]{proskurina-etal-2024-quantization}
Irina Proskurina, Luc Brun, Guillaume Metzler, and Julien Velcin. 2024.
\newblock \href {https://doi.org/10.18653/v1/2024.findings-naacl.124} {When quantization affects confidence of large language models?}
\newblock In \emph{Findings of the Association for Computational Linguistics: NAACL 2024}, pages 1918--1928, Mexico City, Mexico. Association for Computational Linguistics.

\bibitem[{{Qwen Team}(2024)}]{yang2024qwen2}
{Qwen Team}. 2024.
\newblock \href {https://doi.org/10.48550/arXiv.2412.15115} {{Qwen2.5} technical report}.
\newblock \emph{arXiv preprint arXiv:2412.15115}.

\bibitem[{Rajpurkar et~al.(2018)Rajpurkar, Jia, and Liang}]{rajpurkar-etal-2018-know}
Pranav Rajpurkar, Robin Jia, and Percy Liang. 2018.
\newblock \href {https://doi.org/10.18653/v1/P18-2124} {Know what you don{'}t know: Unanswerable questions for {SQ}u{AD}}.
\newblock In \emph{Proceedings of the 56th Annual Meeting of the Association for Computational Linguistics (Volume 2: Short Papers)}, pages 784--789, Melbourne, Australia. Association for Computational Linguistics.

\bibitem[{Sener and Savarese(2018)}]{sener2018active}
Ozan Sener and Silvio Savarese. 2018.
\newblock \href {https://openreview.net/forum?id=H1aIuk-RW} {Active learning for convolutional neural networks: A core-set approach}.
\newblock In \emph{International Conference on Learning Representations}.

\bibitem[{Settles(2009)}]{settles2009active}
Burr Settles. 2009.
\newblock \href {https://burrsettles.com/pub/settles.activelearning.pdf} {Active learning literature survey}.
\newblock Computer Sciences Technical Report 1648, University of Wisconsin--Madison.

\bibitem[{Shao et~al.(2024)Shao, Chen, Zhang, Xu, Zhao, Li, Zhang, Gao, Qiao, and Luo}]{shao2024omniquant}
Wenqi Shao, Mengzhao Chen, Zhaoyang Zhang, Peng Xu, Lirui Zhao, Zhiqian Li, Kaipeng Zhang, Peng Gao, Yu~Qiao, and Ping Luo. 2024.
\newblock \href {https://openreview.net/forum?id=8Wuvhh0LYW} {{OmniQuant}: Omnidirectionally calibrated quantization for large language models}.
\newblock In \emph{International Conference on Learning Representations}.

\bibitem[{Shao et~al.(2026)Shao, Zhang, Cui, Wang, Jiang, Ye, Wang, Du, Fu, Yang et~al.}]{shao2026decodeshare}
Zishan Shao, Lixun Zhang, Kangning Cui, Yixiao Wang, Ting Jiang, Hancheng Ye, Qinsi Wang, Zhixu Du, Yuzhe Fu, Fan Yang, and 1 others. 2026.
\newblock Decodeshare: Tracing the shared subspace of llm decode-time decisions.
\newblock In \emph{Proceedings of the International Conference on Machine Learning}.

\bibitem[{Talmor et~al.(2019)Talmor, Herzig, Lourie, and Berant}]{talmor-etal-2019-commonsenseqa}
Alon Talmor, Jonathan Herzig, Nicholas Lourie, and Jonathan Berant. 2019.
\newblock \href {https://doi.org/10.18653/v1/N19-1421} {{C}ommonsense{QA}: A question answering challenge targeting commonsense knowledge}.
\newblock In \emph{Proceedings of the 2019 Conference of the North {A}merican Chapter of the Association for Computational Linguistics: Human Language Technologies, Volume 1 (Long and Short Papers)}, pages 4149--4158, Minneapolis, Minnesota. Association for Computational Linguistics.

\bibitem[{Tseng et~al.(2024)Tseng, Chee, Sun, Kuleshov, and De~Sa}]{tseng2024quipsharp}
Albert Tseng, Jerry Chee, Qingyao Sun, Volodymyr Kuleshov, and Christopher De~Sa. 2024.
\newblock \href {https://proceedings.mlr.press/v235/tseng24a.html} {{QuIP\#}: Even better {LLM} quantization with hadamard incoherence and lattice codebooks}.
\newblock In \emph{Proceedings of the 41st International Conference on Machine Learning}, volume 235 of \emph{Proceedings of Machine Learning Research}, pages 48630--48656. PMLR.

\bibitem[{Williams and Aletras(2024)}]{williams-aletras-2024-impact}
Miles Williams and Nikolaos Aletras. 2024.
\newblock \href {https://doi.org/10.18653/v1/2024.acl-long.544} {On the impact of calibration data in post-training quantization and pruning}.
\newblock In \emph{Proceedings of the 62nd Annual Meeting of the Association for Computational Linguistics (Volume 1: Long Papers)}, pages 10100--10118, Bangkok, Thailand. Association for Computational Linguistics.

\bibitem[{Williams et~al.(2025)Williams, Chrysostomou, and Aletras}]{williams2025self}
Miles Williams, George Chrysostomou, and Nikolaos Aletras. 2025.
\newblock Self-calibration for language model quantization and pruning.
\newblock In \emph{Proceedings of the 2025 Conference of the Nations of the Americas Chapter of the Association for Computational Linguistics: Human Language Technologies (Volume 1: Long Papers)}, pages 10149--10167.

\bibitem[{Wu et~al.(2026)Wu, Shao, Cui, Kim, Wang, Ye, Zhuo, and Chen}]{wu2026flashsvd}
Wenhao Wu, Zishan Shao, Kangning Cui, Jinhee Kim, Yixiao Wang, Hancheng Ye, Danyang Zhuo, and Yiran Chen. 2026.
\newblock Flashsvd v1. 5: Making low-rank transformers inference actually fast.
\newblock \emph{arXiv preprint arXiv:2605.08314}.

\bibitem[{Xiao et~al.(2023)Xiao, Lin, Seznec, Wu, Demouth, and Han}]{xiao2023smoothquant}
Guangxuan Xiao, Ji~Lin, Mickael Seznec, Hao Wu, Julien Demouth, and Song Han. 2023.
\newblock \href {https://proceedings.mlr.press/v202/xiao23c.html} {{SmoothQuant}: Accurate and efficient post-training quantization for large language models}.
\newblock In \emph{Proceedings of the 40th International Conference on Machine Learning}.

\bibitem[{Xiao et~al.(2022)Xiao, Liang, Bhatt, Neiswanger, Salakhutdinov, and Morency}]{xiao-etal-2022-uncertainty}
Yuxin Xiao, Paul~Pu Liang, Umang Bhatt, Willie Neiswanger, Ruslan Salakhutdinov, and Louis-Philippe Morency. 2022.
\newblock \href {https://doi.org/10.18653/v1/2022.findings-emnlp.538} {Uncertainty quantification with pre-trained language models: A large-scale empirical analysis}.
\newblock In \emph{Findings of the Association for Computational Linguistics: EMNLP 2022}, pages 7273--7284, Abu Dhabi, United Arab Emirates. Association for Computational Linguistics.

\bibitem[{Yao et~al.(2022)Yao, Yazdani~Aminabadi, Zhang, Wu, Li, and He}]{yao2022zeroquant}
Zhewei Yao, Reza Yazdani~Aminabadi, Minjia Zhang, Xiaoxia Wu, Conglong Li, and Yuxiong He. 2022.
\newblock \href {https://proceedings.neurips.cc/paper_files/paper/2022/hash/adf7fa39d65e2983d724ff7da57f00ac-Abstract-Conference.html} {{ZeroQuant}: Efficient and affordable post-training quantization for large-scale transformers}.
\newblock In \emph{Advances in Neural Information Processing Systems}, volume~35.

\bibitem[{Zadrozny and Elkan(2002)}]{zadrozny2002transforming}
Bianca Zadrozny and Charles Elkan. 2002.
\newblock \href {https://doi.org/10.1145/775047.775151} {Transforming classifier scores into accurate multiclass probability estimates}.
\newblock In \emph{Proceedings of the Eighth ACM SIGKDD International Conference on Knowledge Discovery and Data Mining}, pages 694--699.

\bibitem[{Zellers et~al.(2019)Zellers, Holtzman, Bisk, Farhadi, and Choi}]{zellers-etal-2019-hellaswag}
Rowan Zellers, Ari Holtzman, Yonatan Bisk, Ali Farhadi, and Yejin Choi. 2019.
\newblock \href {https://doi.org/10.18653/v1/P19-1472} {{H}ella{S}wag: Can a machine really finish your sentence?}
\newblock In \emph{Proceedings of the 57th Annual Meeting of the Association for Computational Linguistics}, pages 4791--4800, Florence, Italy. Association for Computational Linguistics.

\bibitem[{Zhao et~al.(2021)Zhao, Wallace, Feng, Klein, and Singh}]{zhao2021calibrate}
Tony~Z. Zhao, Eric Wallace, Shi Feng, Dan Klein, and Sameer Singh. 2021.
\newblock \href {https://proceedings.mlr.press/v139/zhao21c.html} {Calibrate before use: Improving few-shot performance of language models}.
\newblock In \emph{Proceedings of the 38th International Conference on Machine Learning}, pages 12697--12706.

\bibitem[{Zhong et~al.(2025)Zhong, Wang, Chuang, and Zou}]{zhong-etal-2025-quantized}
Mingyu Zhong, Guanchu Wang, Yu-Neng Chuang, and Na~Zou. 2025.
\newblock \href {https://doi.org/10.18653/v1/2025.acl-long.1473} {Quantized can still be calibrated: A unified framework to calibration in quantized large language models}.
\newblock In \emph{Proceedings of the 63rd Annual Meeting of the Association for Computational Linguistics (Volume 1: Long Papers)}, pages 30503--30517, Vienna, Austria. Association for Computational Linguistics.

\end{thebibliography}

\appendix
\section{Additional Experimental Results}
\label{app:full_results}

This appendix collects evidence that supports the main text but is too detailed to include there. It is organized as follows: \S\ref{app:bdry_ablations} reports stability checks and ablations on the answerability-boundary target; \S\ref{app:broad_pres} reports broad-preservation diagnostics; \S\ref{app:posthoc_full} analyzes post-hoc calibration; and \S\ref{app:awq-extension} presents the AWQ quantizer-family extension. Throughout, \S\ref{app:bdry_ablations} uses the answerability-boundary ranking, while \S\ref{app:broad_pres} uses broad \fp-behavior preservation. A method can therefore be strong under one target and only average under another. We keep the most representative evidence in this main appendix, while additional diagnostics for broad preservation, post-hoc calibration, and AWQ transfer are collected in Appendix~\ref{app:additional}.

\subsection{Stability and Ablations on the Answerability-Boundary Target}
\label{app:bdry_ablations}

\paragraph{Stability of the core result.}
Table~\ref{tab:squad_nollama} repeats the \squad answerability analysis after removing the Llama-3.2-1B stress case. The ordering remains consistent: \dpq-s128-r75 stays first, and its \fp agreement rises to 0.9360.

\begin{table*}[t]
\centering
\small
\resizebox{0.7\textwidth}{!}{%
\begin{tabular}{lrrrrr}
\toprule
Method & Rank $\downarrow$ & Agr. $\uparrow$ & Acc$\Delta$ $\downarrow$ & Rate$\Delta$ $\downarrow$ & JSD $\downarrow$ \\
\midrule
\dpq-s128-r75 & 6.26 & 0.9360 & 0.0331 & 0.0094 & 0.0088 \\
\dpq-answerability-only & 7.50 & 0.9250 & 0.0569 & 0.0267 & 0.0100 \\
\dpq-entropy-only & 7.66 & 0.9263 & 0.0891 & 0.0446 & 0.0091 \\
\dpq-boundary-random & 8.26 & 0.9326 & 0.0411 & 0.0183 & 0.0093 \\
\dpq-boundary-only & 8.61 & 0.9316 & 0.0460 & 0.0210 & 0.0099 \\
\gptq-ActivationKCenter & 9.39 & 0.9294 & 0.0606 & 0.0291 & 0.0105 \\
Uncertainty-only & 9.67 & 0.9139 & 0.1197 & 0.0590 & 0.0109 \\
\dpq-s64-r50 & 9.71 & 0.9224 & 0.0769 & 0.0364 & 0.0100 \\
\bottomrule
\end{tabular}}
\caption{\squad answerability-boundary preservation after removing the Llama-3.2-1B stress case.
The main r75 result remains stable.}
\label{tab:squad_nollama}
\end{table*}

\paragraph{Answerability distribution diagnostics.}
Figure~\ref{fig:appendix-answerability-dist} provides the same \squad answerability-distribution diagnostic for Qwen2.5-7B and Mistral-7B-v0.3, comparing full precision, GPTQ-WikiText, and \dpq-r75 via the ECDF of $p(\mathrm{answerable})$.
Together with Figure~\ref{fig:llama-answerability-dist}, these plots show that the effect is not a single-model artifact: \dpq-r75 generally reduces quantization-induced answerability-distribution drift, while the direction and magnitude of the residual drift are model-dependent.

\begin{figure*}[htbp]
\centering
\includegraphics[width=\textwidth]{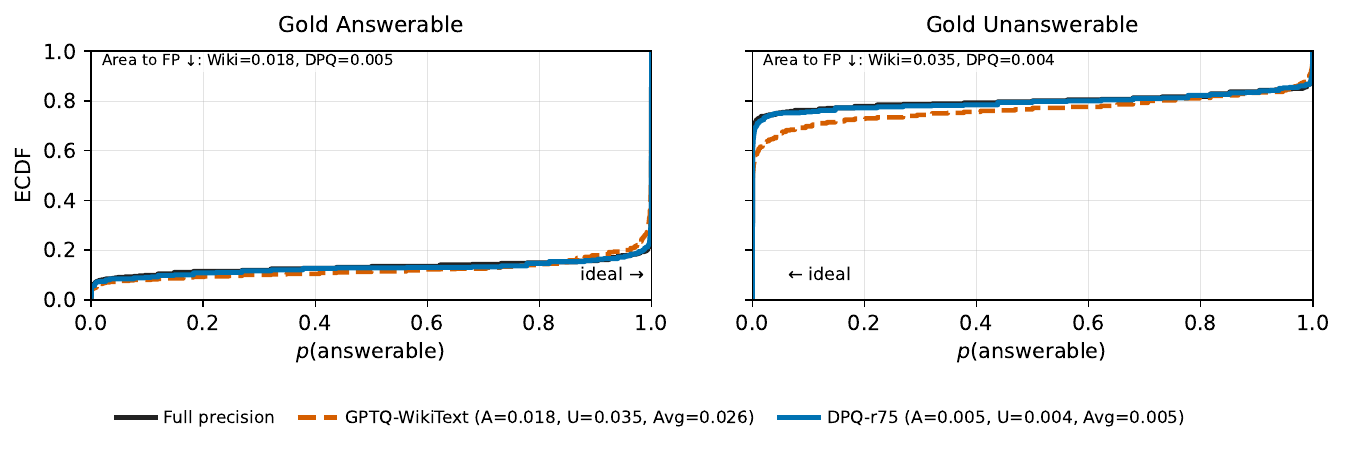}

\vspace{1.0em}

\includegraphics[width=\textwidth]{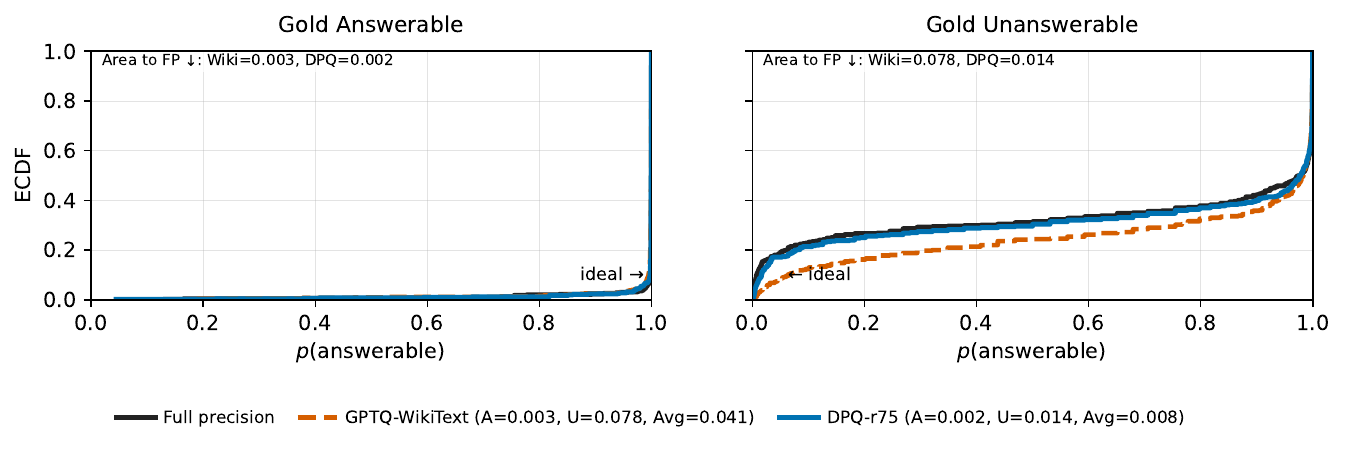}
\caption{
Additional \squad answerability ECDFs for Qwen2.5-7B (top) and Mistral-7B-v0.3 (bottom). Lower area-to-\fp indicates closer preservation of \fp answerability behavior.
}
\label{fig:appendix-answerability-dist}
\end{figure*}

\paragraph{Ablations on the boundary target.}
Table~\ref{tab:bdry_ablations} reports the full ablation grid grouped by axis. The mixture ratio is non-monotonic: both r0 and r100 trail r75, indicating that boundary mass should be high but not exclusive. The size sweep shows that composition of the 128 examples matters more than scaling to 64 or 256. Boundary-only and boundary-random fall below r75, supporting the role of generic anchors; each single-signal component is informative but none matches the mixed recipe. Negative controls (HighNLL-QA, LowDoubt-QA) trail r75 by $\geq 0.09$ on boundary-accuracy deviation, distinguishing difficulty from doubt (Appendix~\ref{app:hard_example_theory}). Strong data-selection baselines---Activation k-center, Uncertainty-only, TaskRandom, RandomQA, and self-calibration---improve over generic WikiText/C4 calibration but remain below \dpq-r75 on the boundary-specific aggregate.

\begin{table*}[t]
\centering
\small
\resizebox{0.8\textwidth}{!}{%
\begin{tabular}{llrrrrr}
\toprule
Group & Method & Rank $\downarrow$ & Agr. $\uparrow$ & Acc$\Delta$ $\downarrow$ & Rate$\Delta$ $\downarrow$ & JSD $\downarrow$ \\
\midrule
Reference & \dpq-s128-r75 & 6.20 & 0.8495 & 0.2125 & 0.1000 & 0.0158 \\
\midrule
\multirow{6}{*}{Ratio / size}
 & \dpq-s64-r50    & 10.18 & 0.8245 & 0.2800 & 0.1382 & 0.0197 \\
 & \dpq-s256-r50   & 11.18 & 0.8240 & 0.2870 & 0.1425 & 0.0235 \\
 & \dpq-s128-r0    & 10.31 & 0.8097 & 0.3205 & 0.1598 & 0.0220 \\
 & \dpq-s128-r25   & 12.06 & 0.8106 & 0.3277 & 0.1639 & 0.0214 \\
 & \dpq-s128-r50   & 13.12 & 0.8140 & 0.3210 & 0.1602 & 0.0258 \\
 & \dpq-s128-r100  & 12.44 & 0.8080 & 0.3340 & 0.1655 & 0.0215 \\
\midrule
\multirow{2}{*}{Boundary mix}
 & \dpq-boundary-random & 8.15 & 0.8413 & 0.2310 & 0.1135 & 0.0174 \\
 & \dpq-boundary-only   & 9.06 & 0.8335 & 0.2520 & 0.1242 & 0.0197 \\
\midrule
\multirow{3}{*}{Single signal}
 & \dpq-answerability-only & 8.18 & 0.8286 & 0.2592 & 0.1281 & 0.0205 \\
 & \dpq-entropy-only       & 8.68 & 0.8270 & 0.2940 & 0.1470 & 0.0227 \\
 & \dpq-confidence-only    & 11.30 & 0.8033 & 0.3460 & 0.1725 & 0.0221 \\
\midrule
\multirow{2}{*}{Neg.\ control}
 & HighNLL-QA  & 11.11 & 0.8191 & 0.3047 & 0.1514 & 0.0201 \\
 & LowDoubt-QA & 12.09 & 0.8177 & 0.3110 & 0.1532 & 0.0202 \\
\midrule
\multirow{7}{*}{Data selection}
 & \gptq-ActivationKCenter & 9.39  & 0.8360 & 0.2530 & 0.1255 & 0.0187 \\
 & Uncertainty-only        & 9.36  & 0.8261 & 0.2937 & 0.1461 & 0.0181 \\
 & \gptq-TaskRandom        & 10.81 & 0.8240 & 0.2920 & 0.1445 & 0.0216 \\
 & \gptq-RandomQA          & 11.09 & 0.8266 & 0.2733 & 0.1351 & 0.0231 \\
 & \gptq-SelfCalib         & 13.24 & 0.8129 & 0.3037 & 0.1509 & 0.0268 \\
 & \gptq-C4                & 15.78 & 0.7594 & 0.4107 & 0.2034 & 0.0342 \\
 & \gptq-WikiText          & 17.29 & 0.7446 & 0.4553 & 0.2271 & 0.0387 \\
\bottomrule
\end{tabular}}
\caption{Boundary-target ablation grid on \squad answerability. Methods are grouped by ablation axis; columns follow Table~\ref{tab:squad}. Acc$\Delta$ and Rate$\Delta$ denote boundary-accuracy and answerability-rate deviation, respectively.}
\label{tab:bdry_ablations}
\end{table*}

\subsection{Broad-Preservation Diagnostics}
\label{app:broad_pres}
When the target shifts from \squad answerability boundaries to broad \fp-behavior preservation, the boundary-heavy \dpq-r75 mixture is not expected to dominate. Table~\ref{tab:extra6_full} expands Table~\ref{tab:extra} on the six answerable extra-\mcqa datasets and is the most representative broad-target leaderboard supporting the target-dependent claim of Section~\ref{sec:mcqa_result}. Confidence-only, TaskRandom, HighNLL-QA, entropy-only, and uncertainty-only all sit above \dpq-r75; \bnb has the strongest \fp agreement and JSD but is not the best average rank, showing that preserving one metric family is insufficient under the full multi-metric objective. Additional diagnostics on calibration-gap prevalence and correct-vs-wrong confidence decomposition are reported in Appendix~\ref{app:additional_broad}; they further show that quantization can change confidence and margin behavior even when top-1 accuracy gives an incomplete picture.

\begin{table*}[t]
\centering
\small
\resizebox{0.85\textwidth}{!}{%
\begin{tabular}{@{}llccccc@{}}
\toprule
Method & Family & Rank $\downarrow$ & Top-5 $\uparrow$ & Agr. $\uparrow$ & JSD $\downarrow$ & Margin$\Delta$ $\downarrow$ \\
\midrule
\dpq{}-confidence-only & \dpq{} component & 8.96 & 7 & 0.8357 & 0.0284 & 0.1447 \\
\gptq{}-TaskRandom & Selection baseline & 9.12 & 11 & 0.8361 & 0.0276 & 0.1452 \\
HighNLL-QA & Negative control & 9.18 & 6 & 0.8327 & 0.0294 & 0.1478 \\
\dpq{}-entropy-only & \dpq{} component & 9.30 & 10 & 0.8324 & 0.0296 & 0.1473 \\
Uncertainty-only & Selection baseline & 9.31 & 4 & 0.8318 & 0.0297 & 0.1471 \\
LowDoubt-QA & Negative control & 9.47 & 9 & 0.8328 & 0.0286 & 0.1472 \\
\gptq{}-ActivationKCenter & Selection baseline & 9.53 & 12 & 0.8349 & 0.0299 & 0.1483 \\
\dpq{}-s128-r50 & \dpq{} mixed & 9.92 & 4 & 0.8322 & 0.0293 & 0.1454 \\
\bnb{} & Quantizer baseline & 10.53 & 12 & 0.8637 & 0.0241 & 0.1320 \\
\dpq{}-boundary-random & \dpq{} mixed & 10.66 & 8 & 0.8279 & 0.0311 & 0.1513 \\
\dpq{}-s128-r75 & \dpq{} mixed & 11.88 & 3 & 0.8180 & 0.0324 & 0.1562 \\
\bottomrule
\end{tabular}}
\caption{
Expanded broad-\mcqa preservation on the six answerable datasets.
The table adds method family and margin drift to the main broad-\mcqa summary in Table~\ref{tab:extra}.
}
\label{tab:extra6_full}
\end{table*}

\subsection{Post-Hoc Calibration Analysis}
\label{app:posthoc_full}

\paragraph{Post-hoc on both suites.}
Table~\ref{tab:posthoc_old_extra} extends the main post-hoc analysis (Table~\ref{tab:posthoc}) to both the old-core and extra-\mcqa suites, providing the most representative summary of how each post-hoc family behaves on \fp-behavior metrics. Adaptive temperature leaves accuracy and \fp agreement unchanged but increases JSD and margin drift, consistent with Proposition~\ref{prop:temp_argmax}. Flexible calibrators (vector, matrix, Dirichlet, isotonic) raise accuracy --- particularly on old-core --- while reducing \fp agreement and increasing distributional drift, consistent with Proposition~\ref{prop:posthoc_conflict}. The two stages thus serve different objectives: post-hoc calibration reshapes scores, whereas pre-quantization data selection controls which behavior is preserved in the first place. Appendix~\ref{app:additional_posthoc} reports an optimally fitted temperature-scaling check, which supports the same complementary-not-substitute conclusion.

\begin{table*}[t]
\centering
\small
\resizebox{0.7\textwidth}{!}{%
\begin{tabular}{@{}llccccc@{}}
\toprule
Suite & Method & Rows & Acc. $\uparrow$ & Agr. $\uparrow$ & JSD $\downarrow$ & Margin$\Delta$ $\downarrow$ \\
\midrule
\multirow{7}{*}{Old-core}
& Base & 72 & 0.6196 & 0.7906 & 0.0364 & 0.1609 \\
& Adaptive temp. & 72 & 0.6196 & 0.7906 & 0.0605 & 0.2892 \\
& Option bias & 72 & 0.5991 & 0.7811 & 0.0410 & 0.1876 \\
& Vector & 72 & 0.7985 & 0.6875 & 0.1360 & 0.3099 \\
& Matrix & 72 & 0.7925 & 0.6796 & 0.1404 & 0.3146 \\
& Dirichlet & 72 & 0.7929 & 0.6817 & 0.1403 & 0.3162 \\
& Isotonic & 72 & 0.7983 & 0.6883 & 0.1398 & 0.2990 \\
\midrule
\multirow{7}{*}{Extra-\mcqa}
& Base & 1056 & 0.6833 & 0.8245 & 0.0320 & 0.1533 \\
& Adaptive temp. & 1056 & 0.6833 & 0.8245 & 0.0445 & 0.2385 \\
& Option bias & 1056 & 0.6985 & 0.8147 & 0.0328 & 0.1591 \\
& Vector & 1056 & 0.7029 & 0.7773 & 0.0519 & 0.2344 \\
& Matrix & 1056 & 0.6992 & 0.7723 & 0.0550 & 0.2380 \\
& Dirichlet & 1056 & 0.6989 & 0.7740 & 0.0548 & 0.2400 \\
& Isotonic & 1056 & 0.6991 & 0.7710 & 0.0561 & 0.2329 \\
\bottomrule
\end{tabular}}
\caption{
Post-hoc family summary on old-core and extra-\mcqa suites.
Flexible calibrators can improve accuracy but often reduce agreement with \fp and increase distributional drift.
}
\label{tab:posthoc_old_extra}
\end{table*}

\subsection{AWQ Quantizer-Family Extension}
\label{app:awq-extension}

The AWQ extension covers all eight models, three old-core datasets, and four AWQ calibration recipes (WikiText, RandomQA, DPQ-r75, DPQ-r50), totaling $8\times 3\times 4=96$ evaluations, to test whether the calibration-data signal observed under GPTQ transfers to a different quantizer family. On AWQ, score-based metrics (Acc, ECE, NLL, Brier) do not always favor the same recipe as \fp-behavior metrics (\fp agreement, JSD-to-\fp), so we report aggregate metric averages and rank-based summaries separately in Appendix~\ref{app:additional_awq}. Aggregate summaries in Appendix~\ref{app:additional_awq} show the same mixed pattern: generic calibration can win score-based metrics, while \dpq variants are stronger on several \fp-behavior and rank-based summaries. On the 7B/8B subset (Qwen2.5-7B, Llama-3.1-8B, Mistral-7B-v0.3), \dpq-r75 additionally leads on accuracy, Brier, \fp agreement, and the rank-based summaries. The most informative single view is the per-dataset breakdown below.

\paragraph{AWQ per-dataset breakdown.}
Table~\ref{tab:awq-per-dataset} reports the full 3-dataset $\times$ 4-recipe AWQ grid. The transfer to AWQ is dataset-dependent. On ARC-Challenge, \dpq-r50 has the best accuracy and \fp agreement and \dpq-r75 the best Brier score, so \dpq variants dominate. On \squad answerability the picture inverts: WikiText takes the best accuracy, ECE, NLL, and Brier, while \dpq variants (r50 marginally above r75) retain only the \fp-agreement lead. On TruthfulQA, WikiText is strongest on every metric reported here. This pattern is consistent with the main message: the calibration-data signal is real under AWQ, but the optimal recipe is quantizer- and target-dependent rather than fixed at r75.

\begin{table*}[t]
\centering
\small
\resizebox{0.85\linewidth}{!}{%
\begin{tabular}{llrrrrrr}
\toprule
Dataset & AWQ recipe & $n$ & Acc. $\uparrow$ & ECE $\downarrow$ & NLL $\downarrow$ & Brier $\downarrow$ & Agr. $\uparrow$ \\
\midrule
\multirow{4}{*}{ARC-Challenge}
& WikiText & 8 & 0.6994 & 0.1185 & 0.8645 & 0.4133 & 0.8516 \\
& RandomQA & 8 & 0.7082 & \textbf{0.0920} & \textbf{0.8191} & 0.3974 & 0.8637 \\
& DPQ-r75 & 8 & 0.7153 & 0.1040 & 0.8360 & \textbf{0.3952} & 0.8662 \\
& DPQ-r50 & 8 & \textbf{0.7174} & 0.1136 & 0.8373 & 0.4008 & \textbf{0.8792} \\
\midrule
\multirow{4}{*}{\squad answerability}
& WikiText & 8 & \textbf{0.6474} & \textbf{0.2122} & \textbf{1.0830} & \textbf{0.5330} & 0.8483 \\
& RandomQA & 8 & 0.6429 & 0.2237 & 1.2086 & 0.5472 & 0.8398 \\
& DPQ-r75 & 8 & 0.6301 & 0.2501 & 1.2885 & 0.5687 & 0.8545 \\
& DPQ-r50 & 8 & 0.6324 & 0.2502 & 1.3269 & 0.5830 & \textbf{0.8558} \\
\midrule
\multirow{4}{*}{TruthfulQA}
& WikiText & 8 & \textbf{0.5332} & \textbf{0.2366} & \textbf{1.7355} & \textbf{0.6887} & \textbf{0.8040} \\
& RandomQA & 8 & 0.5300 & 0.2460 & 1.7858 & 0.6985 & 0.7945 \\
& DPQ-r75 & 8 & 0.4991 & 0.2513 & 1.7787 & 0.7111 & 0.7930 \\
& DPQ-r50 & 8 & 0.4841 & 0.2681 & 1.8821 & 0.7442 & 0.7971 \\
\bottomrule
\end{tabular}}
\caption{AWQ per-dataset $\times$ recipe breakdown averaged over the 8 models. Best values for each dataset and metric are bolded. Agr. denotes agreement with the full-precision model. The preferred recipe varies by dataset and metric, showing that AWQ transfer is target- and metric-dependent rather than fixed to a single calibration recipe.
}
\label{tab:awq-per-dataset}
\end{table*}

\section{Additional Diagnostic Analyses}
\label{app:additional}

This appendix collects additional diagnostic analyses that further decompose or stress-test the results summarized in Appendix~\ref{app:full_results}. These checks provide secondary decompositions and stress tests, while Appendix~\ref{app:full_results} keeps the primary evidence focused on the main target-specific claims.

\subsection{Additional Broad-Preservation Diagnostics}
\label{app:additional_broad}

This subsection provides two compact diagnostics supporting the claim that quantization changes score behavior in ways not captured by top-1 accuracy alone. Table~\ref{tab:gap_full} extends the main degradation summary with confidence shift and margin drift. Table~\ref{tab:correct_wrong} further decomposes confidence on correct and wrong predictions, showing that average calibration metrics can hide changes in the separation between reliable and unreliable outputs.

\paragraph{Calibration-gap prevalence.}
Table~\ref{tab:gap_full} extends Table~\ref{tab:gap} with confidence shift and margin drift. Quantization frequently worsens not only accuracy but also calibration-sensitive and drift metrics. The last two columns highlight that generic-text calibration (WikiText, C4) induces the largest mean confidence and margin shifts, and no row eliminates degradation across all columns.

\begin{table*}[t]
\centering
\small
\resizebox{0.75\textwidth}{!}{%
\begin{tabular}{@{}lccccc cc@{}}
\toprule
\multirow{2}{*}{Method} & \multirow{2}{*}{$n$}
& \multicolumn{4}{c}{Worse than \fp{} (\%, $\downarrow$)}
& \multicolumn{2}{c}{Mean drift ($\downarrow$)} \\
\cmidrule(lr){3-6}\cmidrule(l){7-8}
& & Acc. & ECE & NLL & Brier & Conf. shift & Margin$\Delta$ \\
\midrule
\bnb{} & 72 & 84.7 & 66.7 & 84.7 & 81.9 & 0.0190 & 0.1376 \\
\gptq{}-RandomQA & 72 & 83.3 & 61.1 & 76.4 & 86.1 & 0.0373 & 0.1500 \\
\gptq{}-WikiText & 72 & 76.4 & 58.3 & 63.9 & 83.3 & 0.0583 & 0.1901 \\
\gptq{}-C4 & 72 & 83.3 & 65.3 & 68.1 & 77.8 & 0.0493 & 0.1812 \\
\dpq{}-s128-r75 & 72 & 80.6 & 62.5 & 70.8 & 83.3 & 0.0363 & 0.1543 \\
\dpq{}-s128-r50 & 72 & 79.2 & 58.3 & 65.3 & 80.6 & 0.0321 & 0.1474 \\
\dpq{}-confidence-only & 72 & 80.6 & 62.5 & 66.7 & 81.9 & 0.0323 & 0.1468 \\
\dpq{}-entropy-only & 72 & 76.4 & 73.6 & 75.0 & 86.1 & 0.0322 & 0.1445 \\
Uncertainty-only & 72 & 77.8 & 59.7 & 73.6 & 80.6 & 0.0331 & 0.1473 \\
\gptq{}-TaskRandom & 72 & 84.7 & 65.3 & 73.6 & 83.3 & 0.0320 & 0.1448 \\
\bottomrule
\end{tabular}}
\caption{
Calibration-gap prevalence on all nine datasets.
The first four metric columns follow Table~\ref{tab:gap}; the last two columns report mean confidence shift and margin drift relative to \fp.
}
\label{tab:gap_full}
\end{table*}

\paragraph{Correct-vs-wrong confidence decomposition.}
Table~\ref{tab:correct_wrong} splits confidence into a gap on correct predictions and overconfidence on wrong ones. Quantized methods often reduce confidence on correct predictions while keeping wrong-prediction confidence high on extra-\mcqa, so a method may appear only mildly shifted in average confidence while degrading the separation between reliable and unreliable outputs.

\begin{table*}[t]
\centering
\small
\resizebox{0.8\textwidth}{!}{%
\begin{tabular}{llrrrr}
\toprule
Suite & Method & Acc. & Conf. correct & Correct gap & Conf. wrong \\
\midrule
\multirow{7}{*}{Old-core}
& \fp & 0.6483 & 0.8358 & 0.1642 & 0.7225 \\
& \dpq-s128-r75 & 0.6136 & 0.8054 & 0.1946 & 0.6953 \\
& \dpq-confidence-only & 0.6338 & 0.8106 & 0.1894 & 0.6996 \\
& \dpq-entropy-only & 0.6418 & 0.8153 & 0.1847 & 0.7071 \\
& \gptq-RandomQA & 0.6316 & 0.8089 & 0.1911 & 0.7042 \\
& \gptq-TaskRandom & 0.6296 & 0.8087 & 0.1913 & 0.6999 \\
& \bnb & 0.6239 & 0.8353 & 0.1647 & 0.7233 \\
\midrule
\multirow{8}{*}{Extra-\mcqa}
& \fp & 0.7232 & 0.8424 & 0.1576 & 0.7070 \\
& \dpq-s128-r75 & 0.6892 & 0.8134 & 0.1866 & 0.6929 \\
& \dpq-s128-r50 & 0.6898 & 0.8166 & 0.1834 & 0.6929 \\
& \dpq-confidence-only & 0.6893 & 0.8192 & 0.1808 & 0.6946 \\
& \dpq-entropy-only & 0.6861 & 0.8166 & 0.1834 & 0.6979 \\
& \gptq-RandomQA & 0.6884 & 0.8063 & 0.1937 & 0.6817 \\
& \gptq-TaskRandom & 0.6861 & 0.8116 & 0.1884 & 0.6884 \\
& \bnb & 0.6945 & 0.8348 & 0.1652 & 0.7102 \\
\bottomrule
\end{tabular}}
\caption{Correct-vs-wrong confidence decomposition. ``Correct gap'' is $1-\mathbb{E}[\mathrm{conf}\mid \mathrm{correct}]$; ``Conf.\ wrong'' is $\mathbb{E}[\mathrm{conf}\mid \mathrm{wrong}]$, which directly measures overconfidence on errors.}
\label{tab:correct_wrong}
\end{table*}

\subsection{Additional Post-Hoc Calibration Checks}
\label{app:additional_posthoc}

This subsection adds an optimally fitted temperature-scaling check to complement Table~\ref{tab:posthoc_old_extra}. The goal is to test whether the temperature result in the main post-hoc analysis is simply due to an under-tuned scalar temperature.

\paragraph{Optimally fitted temperature.}
Table~\ref{tab:temperature-aggregate} fits one optimal temperature per model--dataset setting. Temperature scaling improves ECE and NLL deltas relative to \fp, but it leaves accuracy, \fp agreement, and boundary decisions unchanged by construction. This supports the view that temperature scaling can polish proper scores but cannot repair decision-surface drift.

\begin{table}[t]
\centering
\small
\resizebox{\columnwidth}{!}{%
\begin{tabular}{llrrr}
\toprule
Suite & Method & $n$ & ECE $\Delta$ pre$\to$post & NLL $\Delta$ pre$\to$post \\
\midrule
\multirow{4}{*}{Old-core}
& \gptq{}-WikiText & 24 & $-0.009 \to -0.091$ & $-0.050 \to -0.308$ \\
& \gptq{}-RandomQA & 24 & $-0.003 \to -0.097$ & $+0.013 \to -0.314$ \\
& \dpq{}-r75 & 24 & $+0.002 \to -0.102$ & $+0.054 \to -0.296$ \\
& \bnb{} & 24 & $+0.012 \to -0.091$ & $+0.099 \to -0.313$ \\
\midrule
\multirow{4}{*}{Extra-\mcqa}
& \gptq{}-WikiText & 48 & $+0.015 \to -0.048$ & $+0.091 \to -0.037$ \\
& \gptq{}-RandomQA & 48 & $+0.002 \to -0.055$ & $+0.046 \to -0.066$ \\
& \dpq{}-r75 & 48 & $+0.004 \to -0.056$ & $+0.045 \to -0.071$ \\
& \bnb{} & 48 & $+0.015 \to -0.050$ & $+0.078 \to -0.076$ \\
\bottomrule
\end{tabular}}
\caption{
Effect of fitting one optimal temperature per model--dataset setting. Temperature improves proper-score deltas but does not change accuracy, \fp agreement, or boundary decisions.
}
\label{tab:temperature-aggregate}
\end{table}

\subsection{AWQ Aggregate Summaries}
\label{app:additional_awq}

Tables~\ref{tab:awq-metric} and~\ref{tab:awq-rank} report aggregate AWQ metric averages and rank-based summaries, complementing the per-dataset breakdown in Table~\ref{tab:awq-per-dataset}. Across all 8 models, WikiText or RandomQA win several score-based metrics while DPQ variants are stronger on \fp-behavior metrics; the rank aggregate gives \dpq-r75 the best mean rank and top-2 frequency. The 7B/8B subset is cleaner: \dpq-r75 leads on accuracy, Brier, \fp agreement, mean rank, and top-1/top-2 frequencies, with RandomQA only marginally better on ECE and JSD-to-\fp. These aggregate results are consistent with stronger transfer on the 7B/8B-scale models, while smaller models introduce more variability.

\begin{table*}[t]
\centering
\small
\resizebox{0.8\linewidth}{!}{
\begin{tabular}{llrrrrrr}
\toprule
Scope & AWQ method & Acc. $\uparrow$ & ECE $\downarrow$ & NLL $\downarrow$ & Brier $\downarrow$ & Agr. $\uparrow$ & JSD $\downarrow$ \\
\midrule
\multirow{4}{*}{All 8 models}
 & WikiText & 0.6267 & 0.1891 & \textbf{1.2277} & \textbf{0.5450} & 0.8346 & 0.0325 \\
 & RandomQA & \textbf{0.6270} & \textbf{0.1872} & 1.2712 & 0.5477 & 0.8327 & 0.0336 \\
 & DPQ-r75  & 0.6148 & 0.2018 & 1.3011 & 0.5583 & 0.8379 & \textbf{0.0320} \\
 & DPQ-r50  & 0.6113 & 0.2106 & 1.3488 & 0.5760 & \textbf{0.8440} & 0.0332 \\
\midrule
\multirow{4}{*}{7B/8B subset}
 & WikiText & 0.6987 & 0.1857 & \textbf{1.2220} & 0.4784 & 0.8950 & 0.0262 \\
 & RandomQA & 0.7105 & \textbf{0.1795} & 1.2368 & 0.4618 & 0.9116 & \textbf{0.0210} \\
 & DPQ-r75  & \textbf{0.7160} & 0.1802 & 1.2386 & \textbf{0.4558} & \textbf{0.9138} & 0.0213 \\
 & DPQ-r50  & 0.7083 & 0.1842 & 1.2590 & 0.4681 & 0.9034 & 0.0224 \\
\bottomrule
\end{tabular}
}
\caption{AWQ metric averages over old-core datasets, split by model scope.
The 7B/8B subset is Qwen2.5-7B, Llama-3.1-8B, and Mistral-7B-v0.3.
Agr. denotes agreement with the full-precision model.}
\label{tab:awq-metric}
\end{table*}

\begin{table}[t]
\centering
\small
\resizebox{\columnwidth}{!}{
\begin{tabular}{llrrr}
\toprule
Scope & AWQ method & Mean rank $\downarrow$ & Top-1 freq. $\uparrow$ & Top-2 freq. $\uparrow$ \\
\midrule
\multirow{4}{*}{All 8 models}
 & DPQ-r75  & \textbf{2.25} & \textbf{29.2\%} & \textbf{66.7\%} \\
 & RandomQA & 2.41 & 8.3\%  & 62.5\% \\
 & WikiText & 2.66 & 16.7\% & 45.8\% \\
 & DPQ-r50  & 2.69 & 4.2\%  & 50.0\% \\
\midrule
\multirow{4}{*}{7B/8B subset}
 & DPQ-r75  & \textbf{1.97} & \textbf{42.9\%} & \textbf{73.0\%} \\
 & RandomQA & 2.43 & 27.0\% & 47.6\% \\
 & WikiText & 2.76 & 19.1\% & 46.0\% \\
 & DPQ-r50  & 2.79 & 11.1\% & 36.5\% \\
\bottomrule
\end{tabular}
}
\caption{Rank-based AWQ summary by model scope.}
\label{tab:awq-rank}
\end{table}

\section{Formal Analysis and Proofs}
\label{app:formal}

This section gives a more complete formal account of the mechanism. Recall the goal is to make precise three narrower claims: low-margin examples are fragile under quantization; calibration distributions should match the uncertainty target; and post-hoc calibration is not equivalent to pre-quantization preservation.

\subsection{Notation}

For an input $x$ with $K$ candidate options, let $s_0(x)\in\mathbb{R}^{K}$ be the \fp option-score vector and $p_0(x)=\mathrm{softmax}(s_0(x))$. For a quantized model calibrated with set $D$, write $s_D(x)=s_0(x)+\Delta_D(x)$ and $p_D(x)=\mathrm{softmax}(s_D(x))$. Let $i^*(x),j^*(x)$ denote the \fp top option and runner-up. The logit margin is $\gamma(x)=z_{0,i^*}(x)-z_{0,j^*}(x)$, and the probability margin is $m(x)=p_{0,(1)}(x)-p_{0,(2)}(x)$.

\subsection{Boundary Fragility}

\begin{proposition}[Top-two boundary flip condition]
\label{prop:app_flip}
With logit margin $\gamma(x)>0$, the relative ordering of $i^*$ and $j^*$ flips after quantization iff $\Delta_D(x,j^*)-\Delta_D(x,i^*)>\gamma(x)$. If $\|\Delta_D(x)\|_\infty<\gamma(x)/2$, then $i^*$ remains above every other option and the top-1 \fp decision is preserved.
\end{proposition}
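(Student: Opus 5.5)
The argument is a direct computation on the perturbed score vector $s_D(x)=s_0(x)+\Delta_D(x)$. I read the logit margin $\gamma(x)$ as $s_0(x,i^*)-s_0(x,j^*)$, identifying the notation $z_0$ with the \fp option scores $s_0$. Since softmax is strictly monotone in each coordinate, the ordering of option probabilities coincides with the ordering of scores. So both ``relative ordering reversed'' and ``top-1 decision preserved'' can be checked on $s_D$ directly, without touching $p_D$.

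\emph{First claim.} I will write the post-quantization gap between the two leading options as
\[
s_D(x,i^*)-s_D(x,j^*)=\gamma(x)-\bigl(\Delta_D(x,j^*)-\Delta_D(x,i^*)\bigr).
\]
The ordering of $i^*$ and $j^*$ reverses exactly when this quantity is negative, that is, when $\Delta_D(x,j^*)-\Delta_D(x,i^*)>\gamma(x)$. This gives both directions of the ``iff''. I will note that ``flip'' means strict reversal; the boundary case of equality produces a tie and is excluded by the strict inequality.

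\emph{Second claim.} Fix any option $k\neq i^*$. Because $i^*$ is the \fp top and $j^*$ the runner-up, we have $s_0(x,i^*)-s_0(x,k)\ge\gamma(x)$. By the triangle inequality,
\[
s_D(x,i^*)-s_D(x,k)\ge \gamma(x)-|\Delta_D(x,i^*)|-|\Delta_D(x,k)|>\gamma(x)-2\cdot\gamma(x)/2=0.
\]
Hence $i^*$ stays strictly above every other option, the argmax of $p_D$ equals $i^*$, and the \fp top-1 decision is preserved.

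The only step needing care is the bound $s_0(x,i^*)-s_0(x,k)\ge\gamma(x)$ for all $k\neq i^*$. It relies on $j^*$ being a maximizer among the non-top options, which holds whenever the top option is unique, guaranteed here by $\gamma(x)>0$. Ties among runner-ups do not matter, since the bound uses only the runner-up score value. Proposition~\ref{prop:boundary} in the main text follows verbatim from this argument.
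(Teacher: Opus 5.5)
Your proof is correct and follows the paper's argument essentially verbatim. The first claim comes from writing the top-two gap as $\gamma(x)+\Delta_D(x,i^*)-\Delta_D(x,j^*)$, and the second from $s_0(x,i^*)-s_0(x,k)\ge\gamma(x)$ for all $k\neq i^*$ combined with the sup-norm bound. Your remarks on softmax monotonicity and on ties are harmless clarifications that the paper leaves implicit.
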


\begin{proof}
Write $\delta_k=\Delta_D(x,k)$ for option index $k$. For the top-two pair, $z_D(x,i^*)-z_D(x,j^*)=\gamma(x)+\delta_{i^*}-\delta_{j^*}$, which is negative iff $\delta_{j^*}-\delta_{i^*}>\gamma(x)$. For any $k\neq i^*$, $z_0(x,i^*)-z_0(x,k)\geq\gamma(x)$, so if $\|\Delta_D(x)\|_\infty<\gamma(x)/2$ then $z_D(x,i^*)-z_D(x,k)>0$.
\end{proof}

\begin{corollary}[Flip mass is controlled by the margin distribution]
\label{cor:margin_mass}
Assume $\|\Delta_D(x)\|_\infty\leq \eta$ for all $x$ in a test distribution $T$. Then
$\Pr_{x\sim T}[\arg\max z_D(x)\neq \arg\max z_0(x)] \leq \Pr_{x\sim T}[\gamma(x)\leq 2\eta]$.
\end{corollary}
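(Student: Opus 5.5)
The plan is to derive the corollary pointwise from the second part of Proposition~\ref{prop:app_flip} and then integrate over $T$. We show an event inclusion: for every $x$ in the support of $T$,
\[
\{\arg\max z_D(x)\neq \arg\max z_0(x)\}\subseteq\{\gamma(x)\leq 2\eta\}.
\]
Monotonicity of probability under $T$ then gives the stated bound directly. No distributional assumption beyond measurability of $\gamma$ and of the flip event is needed.

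\textbf{Pointwise step.} We prove the inclusion in contrapositive form. Fix $x$ with $\gamma(x)>2\eta$. The uniform hypothesis gives $\|\Delta_D(x)\|_\infty\leq\eta<\gamma(x)/2$. This strict inequality is exactly the sufficient condition in Proposition~\ref{prop:app_flip}. Hence $i^*(x)$ remains strictly above every other option after quantization, so $\arg\max z_D(x)=i^*(x)=\arg\max z_0(x)$.

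\textbf{Boundary case.} The only delicate point is $\gamma(x)=2\eta$. There the hypothesis $\|\Delta_D\|_\infty\leq\eta$ does not give the strict bound the proposition requires, and indeed $\delta_{j^*}=\eta$, $\delta_{i^*}=-\eta$ produces a tie. This is why the right-hand side uses $\leq 2\eta$ rather than $<2\eta$: such $x$ are simply placed in the bad event. A related point is ties in the full-precision scores. Since $\gamma(x)>0$ is assumed, $i^*$ is unique and $\arg\max z_0$ is well defined. For $x$ with $\gamma(x)>2\eta$, the argmax of $z_D$ is also unique, so no tie-breaking convention is needed on the good event.

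\textbf{Conclusion.} Taking $\Pr_{x\sim T}$ of both sides of the inclusion yields the corollary. The same argument shows the flip mass is controlled by the lower tail of the margin distribution at scale $2\eta$. This is the sense in which calibration mass placed on low-margin examples (small $\gamma$) targets the only region where flips can occur. We expect no real obstacle; the main care is handling the non-strict inequality and the tie conventions as described above.
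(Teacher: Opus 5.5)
Your proposal is correct and matches the paper's route: the paper gives no separate proof, treating the corollary as immediate from the second part of Proposition~\ref{prop:app_flip} via exactly your pointwise inclusion (if $\gamma(x)>2\eta$ then $\|\Delta_D(x)\|_\infty\leq\eta<\gamma(x)/2$, so the argmax is preserved), followed by monotonicity of $\Pr_{x\sim T}$. Your handling of the boundary case $\gamma(x)=2\eta$, where the non-strict bound can produce a tie and such $x$ are placed in the bad event, is a correct detail that the paper leaves implicit.
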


\subsection{Calibration Distribution and Target Mismatch}
\label{app:finite_budget_tradeoff}

Let $q$ denote the distribution over calibration strings used by the quantizer, and $T$ the target evaluation distribution.

\begin{proposition}[Target mismatch bound]
\label{prop:mismatch_app}
For any bounded loss $\ell_D\in[0,1]$, $|R_T(D)-R_q(D)|\leq \mathrm{TV}(T,q)$.
\end{proposition}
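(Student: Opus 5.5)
The bound follows from the variational characterization of total variation applied to a function with range in $[0,1]$. Fix the calibration set $D$, so that $\ell_D$ is a fixed measurable function of $x$ with values in $[0,1]$; the calibration set enters only through this function, so no property of the quantizer is needed. Write the risk gap as a single integral against the signed measure $T-q$:
\[
R_T(D)-R_q(D)=\int \ell_D(x)\,\bigl(T-q\bigr)(dx).
\]
Throughout we use the convention $\mathrm{TV}(T,q)=\sup_{A}|T(A)-q(A)|=\tfrac12\int|T-q|$.

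\textbf{Step 1: shift the loss.} Because $T$ and $q$ are both probability measures, $\int c\,(T-q)(dx)=0$ for any constant $c$. Replacing $\ell_D$ by $\ell_D-\tfrac12$ therefore leaves the gap unchanged, and the shifted function satisfies $|\ell_D-\tfrac12|\le\tfrac12$ pointwise.

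\textbf{Step 2: bound the integral.} It follows that
\[
|R_T(D)-R_q(D)|\le \tfrac12\int|T-q|=\mathrm{TV}(T,q).
\]
An equivalent route uses the Hahn decomposition. Let $A$ be the set on which $T\ge q$ (via densities with respect to the dominating measure $T+q$). Since $\ell_D\in[0,1]$,
\[
\int\ell_D\,(T-q)\le (T-q)(A)=\mathrm{TV}(T,q).
\]
The symmetric argument on $A^c$ gives the lower bound $-\mathrm{TV}(T,q)$.

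\textbf{Main obstacle.} The only delicate point is the normalization of TV. If TV were defined as $\int|T-q|$ without the $\tfrac12$, the bound would still hold but would be loose by a factor of two. The centering in Step 1, or equivalently the Hahn-set argument, is exactly what obtains the sharp constant. Measurability and existence of densities are routine: both measures are absolutely continuous with respect to $T+q$, so Radon–Nikodym applies.
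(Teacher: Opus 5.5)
Your proof is correct and follows the paper's route: the paper simply invokes the variational characterization $\sup_{0\le f\le 1}|\mathbb{E}_T f-\mathbb{E}_q f|=\mathrm{TV}(T,q)$ and takes $f=\ell_D$. Your centering and Hahn-set arguments prove the needed direction of that characterization explicitly, using the same $\tfrac12\int|T-q|$ normalization the paper relies on elsewhere (e.g., for $\mathrm{TV}(T_\theta,q_r)=|\theta-r|$).
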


\begin{proof}
By the variational characterization, $\sup_{0\leq f\leq 1} |\mathbb{E}_{T}f-\mathbb{E}_{q}f|=\mathrm{TV}(T,q)$. Taking $f=\ell_D$ gives the result.
\end{proof}

\paragraph{Mixture interpretation.}
Write $q_r=r\, q_{\mathrm{bdry}}+(1-r)\, q_{\mathrm{anchor}}$ and $T_\theta=\theta q_{\mathrm{bdry}}+(1-\theta)q_{\mathrm{anchor}}$. If $q_{\mathrm{bdry}}$ and $q_{\mathrm{anchor}}$ have disjoint support, $\mathrm{TV}(T_\theta,q_r)=|\theta-r|$. This is the form used in Proposition~\ref{prop:mismatch} of the main text.

\paragraph{Finite-sample coverage.}
Calibration sets are empirical samples. If random QA calibration samples $n$ examples from a distribution with boundary mass $\rho=\Pr[B]$, then $\Pr[N_B=0]=(1-\rho)^n \leq \exp(-n\rho)$, so $n\geq \log(1/\delta)/\rho$ is needed to see a boundary example with probability $\geq 1-\delta$. When $\rho$ is small, random calibration needs many samples to cover the fragile region. \dpq bypasses this by selecting boundary examples directly using the \fp model.

\subsection{Boundary vs Hard-Example Mining}
\label{app:hard_example_theory}

High NLL does not imply low margin: a binary example with \fp probabilities $(1-\epsilon,\epsilon)$ and gold label option 2 has NLL $-\log\epsilon$ (arbitrarily large) but margin $1-2\epsilon$ (close to 1, far from the boundary). Conversely, an example with \fp probabilities $(1/2+\epsilon,1/2-\epsilon)$ and gold label option 1 has moderate NLL but margin $2\epsilon$ (arbitrarily small). Hard-example mining on gold-label NLL therefore selects a subset distinct from boundary mining; an example can be confidently wrong (high NLL, large margin) without being uncertain.

\subsection{Why Post-hoc Calibration Is Not Equivalent}

\begin{proposition}[Temperature scaling preserves the decision surface]
\label{prop:temp_argmax}
For any $z\in\mathbb{R}^K$ and $T>0$, $\arg\max_i \mathrm{softmax}(z/T)_i = \arg\max_i z_i$. Therefore, scalar temperature scaling cannot repair a top-1 or answerability-boundary flip in the quantized model.
\end{proposition}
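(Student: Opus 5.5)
The plan is to show that dividing the logits by $T$ is a strictly increasing transformation of each coordinate and that softmax itself preserves order among coordinates. Then I will deduce the consequence for flips.

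\textbf{First step.} Fix $z\in\mathbb{R}^K$ and $T>0$. The map $u\mapsto u/T$ is strictly increasing on $\mathbb{R}$, so for any pair $i,k$ we have $z_i>z_k$ iff $z_i/T>z_k/T$, and similarly for equality. Hence the argmax set of $z/T$ coincides with that of $z$.

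\textbf{Second step.} Show softmax is order-preserving coordinatewise. For any $w\in\mathbb{R}^K$, $\mathrm{softmax}(w)_i=\exp(w_i)/Z(w)$ with the common normalizer $Z(w)=\sum_j\exp(w_j)>0$. Since $\exp$ is strictly increasing, $\mathrm{softmax}(w)_i>\mathrm{softmax}(w)_k$ iff $w_i>w_k$. Applying this with $w=z/T$ and chaining with the first step gives $\arg\max_i\mathrm{softmax}(z/T)_i=\arg\max_i z_i$ as sets, so ties are handled too. If one prefers a single-valued argmax with a fixed tie-breaking rule, the identical ordering makes the tie-broken choice agree as well.

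\textbf{Consequence.} Apply the identity with $z=s_D(x)$, the quantized score vector, for every $x$. The decision $a^Q_D(x)$ is unchanged by any temperature $T>0$, including one fitted per model--dataset or per input (``adaptive''), provided it is positive and shared across options of the same $x$. Therefore, if $a^Q_D(x)\neq a_0(x)$ (a top-1 flip, or in the two-option \squad case an answerability flip, cf.\ Proposition~\ref{prop:app_flip}), it remains a flip after scaling, and the indicator term in $\mathcal{R}_{\mathrm{bdry}}$ is invariant.

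The main obstacle is only interpretive rather than mathematical. I would state explicitly that the argument covers scalar (option-shared) temperatures with $T>0$. Option-specific scalings or added biases, i.e.\ vector/matrix/Dirichlet maps, are excluded, and those can indeed change decisions, as in Table~\ref{tab:posthoc}.
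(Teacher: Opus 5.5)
Your proof is correct and matches the argument the paper relies on. The paper states this proposition without a written proof and treats it as immediate ("unchanged by construction"): strict monotonicity of $u\mapsto u/T$ for $T>0$ and of $\exp$ under a common normalizer leaves the quantized decision $a^Q_D(x)$, and hence any flip relative to $a_0(x)$, invariant. Your handling of ties and your restriction to positive, option-shared temperatures (which excludes vector, matrix and Dirichlet maps) are appropriate refinements of that same argument.
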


\begin{proposition}[Accuracy repair and FP preservation can conflict]
\label{prop:posthoc_conflict}
There exist binary examples for which a post-hoc mapping improves accuracy relative to a quantized model while decreasing agreement with the \fp model.
\end{proposition}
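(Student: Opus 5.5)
The plan is to prove Proposition~\ref{prop:posthoc_conflict} by exhibiting an explicit binary construction, since the statement is existential. I will work with $K=2$ and fix three inputs $x_1,x_2,x_3$ with equal weight. I specify the \fp probabilities $p_0(\cdot\mid x)$, the quantized probabilities $p_D(\cdot\mid x)$, and the gold labels directly. The post-hoc map will be a single fixed score-space transformation applied to $p_D$. I will use a constant option bias, or equivalently a threshold shift on $p_D(\text{option }1\mid x)$, because it lies inside the flexible calibrator families discussed in \S\ref{sec:related}.

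The key ingredient is an input on which the \fp model is itself wrong and the quantized model happens to agree with it. For $x_1$, take gold label $2$, $p_0=(0.6,0.4)$ and $p_D=(0.55,0.45)$. Both models predict option $1$, so they agree, and both are wrong. For $x_2$, take gold label $1$ with $p_0=(0.9,0.1)$ and $p_D=(0.9,0.1)$; this is a stable example that any mild bias leaves unchanged. Now apply the post-hoc map that adds a bias $\beta$ to the logit of option $2$, chosen so that the decision threshold on $p_D(\text{option }1)$ moves to a value $t\in(0.55,0.9)$, for example $t=0.6$.

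With this map, $x_1$ flips to option $2$, which matches the gold label, so accuracy rises from $1/2$ to $1$. But the post-hoc prediction no longer matches $a_0(x_1)=1$, so agreement with \fp drops from $1$ to $1/2$. On $x_2$ the prediction is unchanged and correct. The third input $x_3$ is optional. I will include it only to show that the conflict does not rely on $\beta$ being degenerate; for instance, a point where both models predict correctly with high margin stays fixed. I will then check the arithmetic: compute $\mathrm{Acc}$ and $\mathrm{Agr}$ before and after as averages of indicator functions, and confirm the strict increase in the first and the strict decrease in the second.

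The only subtle step is making sure the map is a legitimate post-hoc calibrator, meaning it is fit without access to $p_0$. I would argue this by noting that the bias $\beta$ is exactly what a label-supervised fit would select on this tiny validation set: it is the accuracy-maximizing threshold, and it also lowers NLL on the labeled data. This links the example to the empirical pattern in Table~\ref{tab:posthoc}. Proposition~\ref{prop:temp_argmax} shows that a pure temperature map cannot produce this effect, which is why the construction needs a bias term. I will note this contrast briefly to explain why the example requires a map that can move the decision surface.
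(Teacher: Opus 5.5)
Your construction is correct, and it takes a different route from the paper's proof.

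\textbf{The paper's example.} The paper uses a single example with $p_0=(0.55,0.45)$, $p_Q=(0.45,0.55)$ and gold label 2. Here the quantized model is already correct and already disagrees with \fp. The post-hoc target $p_H=(0.10,0.90)$ leaves both top-1 accuracy and top-1 \fp agreement unchanged. What the example actually shows is that gold-label NLL falls while JSD to $p_0$ rises. The decision-level conflict is argued only in the reverse direction: a map that restores agreement with $p_0$ would cost accuracy.

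\textbf{Your example.} You place the quantized model in agreement with a \emph{wrong} \fp prediction. You then use an explicit option-bias map, whose threshold $\sigma(\beta)\in(0.55,0.9)$ on $p_D(\text{option }1)$ flips $x_1$ to the gold label. Under that single map, top-1 accuracy strictly rises ($1/2\to 1$) and top-1 \fp agreement strictly falls ($1\to 1/2$). This proves the statement as literally worded, and it ties directly to the option-bias row of Table~\ref{tab:posthoc}. The paper's one-line version is shorter and addresses the distributional (JSD) notion of preservation instead.

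Both arguments rest on the same mechanism: when \fp is wrong on the gold label, moving toward the label moves away from \fp.

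\textbf{Minor remarks.}
\begin{itemize}
\item $x_1$ alone already suffices (accuracy $0\to 1$, agreement $1\to 0$). $x_2$ only serves to make the map non-constant. If you keep $x_3$, your fractions become thirds.
\item Your aside that $t=0.6$ is ``exactly'' what a supervised fit would select is loose. The NLL-optimal bias on $\{x_1,x_2\}$ is $\beta^*=(u_1+u_2)/2$ with $u_k=\log\frac{p_D(1\mid x_k)}{p_D(2\mid x_k)}$, which gives threshold $\sigma(\beta^*)\approx 0.77$. That value also lies in $(0.55,0.9)$, so the conclusion is unaffected. In any case, the existential statement does not require the map to be a legitimately fitted calibrator.
\end{itemize}
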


\begin{proof}
Take $p_0=(0.55,0.45)$, $p_Q=(0.45,0.55)$, gold label 2. A post-hoc map producing $p_H=(0.10,0.90)$ lowers NLL on the gold label but has larger JSD from $p_0$. A map restoring agreement with $p_0$ would reduce accuracy on this example. Improving accuracy and preserving the \fp decision surface conflict here.
\end{proof}

\subsection{Correct-vs-Wrong Decomposition}

Let $A$ be the event that the prediction is correct, $c(x)=\max_i p_i(x)$, and $a=\Pr[A]$. With $G_{\mathrm{corr}}=\mathbb{E}[1-c\mid A]$ and $O_{\mathrm{wrong}}=\mathbb{E}[c\mid A^c]$, $\mathbb{E}[c]=a(1-G_{\mathrm{corr}})+(1-a)O_{\mathrm{wrong}}$. Two methods with similar ECE may differ on which component dominates.

\section{Reproducibility Details}
\label{app:repro}

\paragraph{Experimental inventory and completeness.}
Table~\ref{tab:method_inventory} lists the experimental groups. The final merged result files contain zero missing entries for 552 old-core pre-quantization rows and 1104 extra-\mcqa pre-quantization rows.

\begin{table*}[t]
\centering
\small
\resizebox{0.95\linewidth}{!}{%
\begin{tabular}{lll}
\toprule
Group & Purpose & Coverage \\
\midrule
Full precision & unquantized reference & all 9 datasets \\
\bnb & 4-bit NF4 quantizer baseline & all 9 datasets \\
\gptq-WikiText / \gptq-C4 & generic text calibration & WikiText all 9; C4 extended \\
\gptq-RandomQA / TaskRandom & task-formatted random QA & all 9 datasets \\
\dpq-r0/r25/r50/r75/r100 & high-doubt ratio sweep at $s{=}128$ & old-core + extra-\mcqa \\
\dpq-s64/s128/s256-r50 & calibration-size sweep & old-core + extra-\mcqa \\
boundary-only / boundary-random & boundary component controls & old-core + extra-\mcqa \\
confidence / entropy / answerability-only & single-signal DPQ components & old-core + extra-\mcqa \\
low-doubt / high-NLL QA & negative controls & old-core + extra-\mcqa \\
activation k-center / self-calib & strong data-selection baselines & old-core + extra-\mcqa \\
temperature / score-space post-hoc & post-hoc controls & selected core controls \\
AWQ extension & quantizer-family check & old core, 8 models, 4 recipes \\
\bottomrule
\end{tabular}}
\caption{Inventory of experimental groups.}
\label{tab:method_inventory}
\end{table*}


\paragraph{Model sizes and checkpoints.}
Table~\ref{tab:model-identifiers} reports the model families, nominal parameter scales, and public checkpoints used in the experiments. 
All models are instruction-tuned language models in the 0.5B--8B range. 
We report nominal model scales because the exact trainable parameter count can depend on checkpoint metadata and tokenizer/configuration conventions, while the deployment-relevant scale is the advertised checkpoint size.

\begin{table*}[t]
\centering
\small
\resizebox{0.75\linewidth}{!}{
\begin{tabular}{lll}
\toprule
Tag & Nominal scale & Checkpoint \\
\midrule
\texttt{qwen25\_05b} & 0.5B & \texttt{Qwen/Qwen2.5-0.5B-Instruct} \\
\texttt{qwen25\_1p5b} & 1.5B & \texttt{Qwen/Qwen2.5-1.5B-Instruct} \\
\texttt{qwen25\_3b} & 3B & \texttt{Qwen/Qwen2.5-3B-Instruct} \\
\texttt{qwen25\_7b} & 7B & \texttt{Qwen/Qwen2.5-7B-Instruct} \\
\texttt{llama32\_1b\_instruct} & 1B & \texttt{meta-llama/Llama-3.2-1B-Instruct} \\
\texttt{llama32\_3b\_instruct} & 3B & \texttt{meta-llama/Llama-3.2-3B-Instruct} \\
\texttt{llama31\_8b\_instruct} & 8B & \texttt{meta-llama/Llama-3.1-8B-Instruct} \\
\texttt{mistral7b\_v03} & 7B & \texttt{mistralai/Mistral-7B-Instruct-v0.3} \\
\bottomrule
\end{tabular}
}
\caption{Model identifiers and nominal parameter scales.}
\label{tab:model-identifiers}
\end{table*}

\paragraph{Compute infrastructure and budget.}
All experiments were conducted on a single NVIDIA GeForce RTX 5090 GPU with 32GB memory. 
The study uses post-training quantization, calibration-data selection, inference-based evaluation, score-space post-hoc analysis, and AWQ extension experiments; no model was trained from scratch or instruction-tuned as part of the main method. 
The dominant costs are repeated 4-bit GPTQ/AWQ quantization and deterministic option-scoring evaluation over saved datasets. 
We retained all prediction JSONL files, summary CSVs, merged reports, and derived-analysis tables, but cleaned quantized checkpoint directories and Hugging Face caches during experimentation. 
We report the hardware environment rather than a precise GPU-hour total because exploratory runs, sanity checks, and reruns were not logged with a separate compute ledger; the final experiments are reproducible on the single-GPU infrastructure described above.

\paragraph{Dataset sizes and splits.}
Table~\ref{tab:dataset_sizes} reports the deterministic evaluation sizes used after filtering and subsampling. Calibration examples are drawn from training or calibration pools only; evaluation examples are never used to construct \dpq calibration strings.

\begin{table}[t]
\centering
\small
\begin{tabular}{@{}llr@{}}
\toprule
Dataset & Suite & Eval. examples \\
\midrule
ARC-Challenge & Old-core & 299 \\
\squad answerability & Old-core & 1000 \\
TruthfulQA & Old-core & 817 \\
ARC-Easy & Extra-\mcqa & 570 \\
BoolQ & Extra-\mcqa & 1000 \\
PIQA & Extra-\mcqa & 1000 \\
HellaSwag & Extra-\mcqa & 1000 \\
OpenBookQA & Extra-\mcqa & 500 \\
CommonsenseQA & Extra-\mcqa & 1000 \\
\bottomrule
\end{tabular}
\caption{Evaluation sizes after deterministic filtering and subsampling.}
\label{tab:dataset_sizes}
\end{table}

\paragraph{Implementation details.}
The released scripts use teacher-forced option scoring. For each option, the option text is appended to the prompt; only option tokens are scored, and the option score is the mean token log-probability unless explicitly disabled. All evaluation runs use left padding, maximum sequence length 2048, and deterministic JSONL prediction files containing scores, probabilities, predicted option, gold option, confidence, entropy, and an unknown flag for \squad answerability.

\paragraph{GPTQ and BNB settings.}
GPTQ runs use 4-bit quantization through the Transformers \texttt{GPTQConfig} interface with model sequence length 512, batch size 1, and at most 128 calibration examples unless the size ablation specifies 64 or 256. The \texttt{desc\_act} flag is off by default. The BNB baseline uses 4-bit NF4 with double quantization and the model dtype as compute dtype (bfloat16 if supported, else float16).

\paragraph{DPQ calibration construction.}
Candidate boundary pools are built from ARC-Challenge training examples and \squad training examples, with 512 candidates from each source before model-specific scoring; \squad candidates are balanced between answerable and unanswerable cases. Formal calibration files are built with seed 4242, and the candidate-pool construction uses seed 2027. DPQ ranks candidates by margin/confidence/entropy/answerability/NLL depending on the variant. For boundary strings, the selected prompt is concatenated with the \fp top options. Mixed recipes allocate a fraction of the calibration set to boundary/high-doubt strings and the remainder to WikiText/random-QA anchors. This formatting is kept fixed across \dpq variants, so the reported ratio, size, component, and negative-control ablations compare selection choices under a shared boundary-string construction; fully separating formatting effects from selection effects is a natural follow-up control.

\paragraph{Prompt format and option scoring.}
Multiple-choice prompts are formatted as a question followed by labeled choices and the string ``The correct answer is''. Candidate options are scored as label continuations. For \squad answerability, each prompt contains the passage, question, and two options: ``Answerable from the passage'' and ``Unanswerable from the passage''. Scores are length-normalized by default and converted to option probabilities with a softmax.

\paragraph{Post-hoc controls.}
Temperature and score-space post-hoc methods are evaluated from saved logits/probabilities. Temperature scaling is fitted per evaluation setting for the additional temperature check. Flexible score-space calibrators, including option-bias, vector, matrix, Dirichlet, and isotonic mappings, are evaluated on saved option scores.

\paragraph{Artifacts, licenses, and retained outputs.}
We use public model checkpoints, quantization implementations, and standard NLP benchmarks for research evaluation only, without redistributing the original checkpoints or datasets. We retained prediction JSONL files, summary CSVs, merged reports, and derived-analysis tables; quantized checkpoint directories and local caches were cleaned during experimentation.

\end{document}